\newcommand{\gD}{\mathcal{D}}
\newcommand{\gC}{\mathcal{C}}
\newcommand{\gT}{\mathcal{T}}
\newcommand{\vx}{\boldsymbol{x}}
\newcommand{\vz}{\boldsymbol{z}}
\newcommand{\xmark}{\ding{55}}%
\newcommand*\samethanks[1][\value{footnote}]{\footnotemark[#1]}
\begin{document}

\title{CLOSER: Towards Better Representation Learning for Few-Shot Class-Incremental Learning} 

\titlerunning{CLOSER: Representation Learning for Few-Shot Class-Incremental Learning}

\author{Junghun Oh\thanks{Equal contribution}\inst{1}\orcidlink{0009-0007-0840-9774} \and
Sungyong Baik\samethanks\inst{3}\orcidlink{0000-0001-5702-4618} \and
Kyoung Mu Lee\inst{1,2}\orcidlink{0000-0001-7210-1036}}

\authorrunning{J. Oh$^*$, S. Baik$^*$, and K.M. Lee}

\institute{$^1$Dept. of ECE\&ASRI, $^2$IPAI, Seoul National University\\
$^3$Dept. of Data Science, Hanyang University\\
\email{\{dh6dh,kyoungmu\}@snu.ac.kr},
\email{dsybaik@hanyang.ac.kr}}

\maketitle

\begin{abstract}

Aiming to incrementally learn new classes with only few samples while preserving the knowledge of base (old) classes, few-shot class-incremental learning (FSCIL) faces several challenges, such as overfitting and catastrophic forgetting.
Such a challenging problem is often tackled by fixing a feature extractor trained on base classes to reduce the adverse effects of overfitting and forgetting.
Under such formulation, our primary focus is representation learning on base classes to tackle the unique challenge of FSCIL: simultaneously achieving the transferability and the discriminability of the learned representation.
Building upon the recent efforts for enhancing transferability, such as promoting the spread of features, we find that trying to secure the spread of features within a more confined feature space enables the learned representation to strike a better balance between transferability and discriminability.
Thus, in stark contrast to prior beliefs that the inter-class distance should be maximized, we claim that the closer different classes are, the better for FSCIL.
The empirical results and analysis from the perspective of information bottleneck theory justify our simple yet seemingly counter-intuitive representation learning method, raising research questions and suggesting alternative research directions.
The code is available \href{https://github.com/JungHunOh/CLOSER_ECCV2024}{here}.
\keywords{Few-shot class incremental learning \and Representation learning \and Transferability}

\end{abstract}

\section{Introduction}
Owing to its strong representation power, deep neural networks (DNNs) boast outstanding performance across various fields.
However, such feats require tremendous human effort and time to collect an immense amount of data with accurate annotation.
The data hunger of DNNs poses a challenge, especially in dynamic real-world environments, where DNNs are required to learn new concepts with few examples while retaining previously learned concepts. 
To tackle the challenge, few-shot class-incremental learning (FSCIL)~\cite{tao2020topic} aims to design artificial intelligence systems that can learn new classes with few examples while maintaining performance on previously seen classes.

\begin{figure*}[t!]
    \centering
    \includegraphics[width=0.85\linewidth]{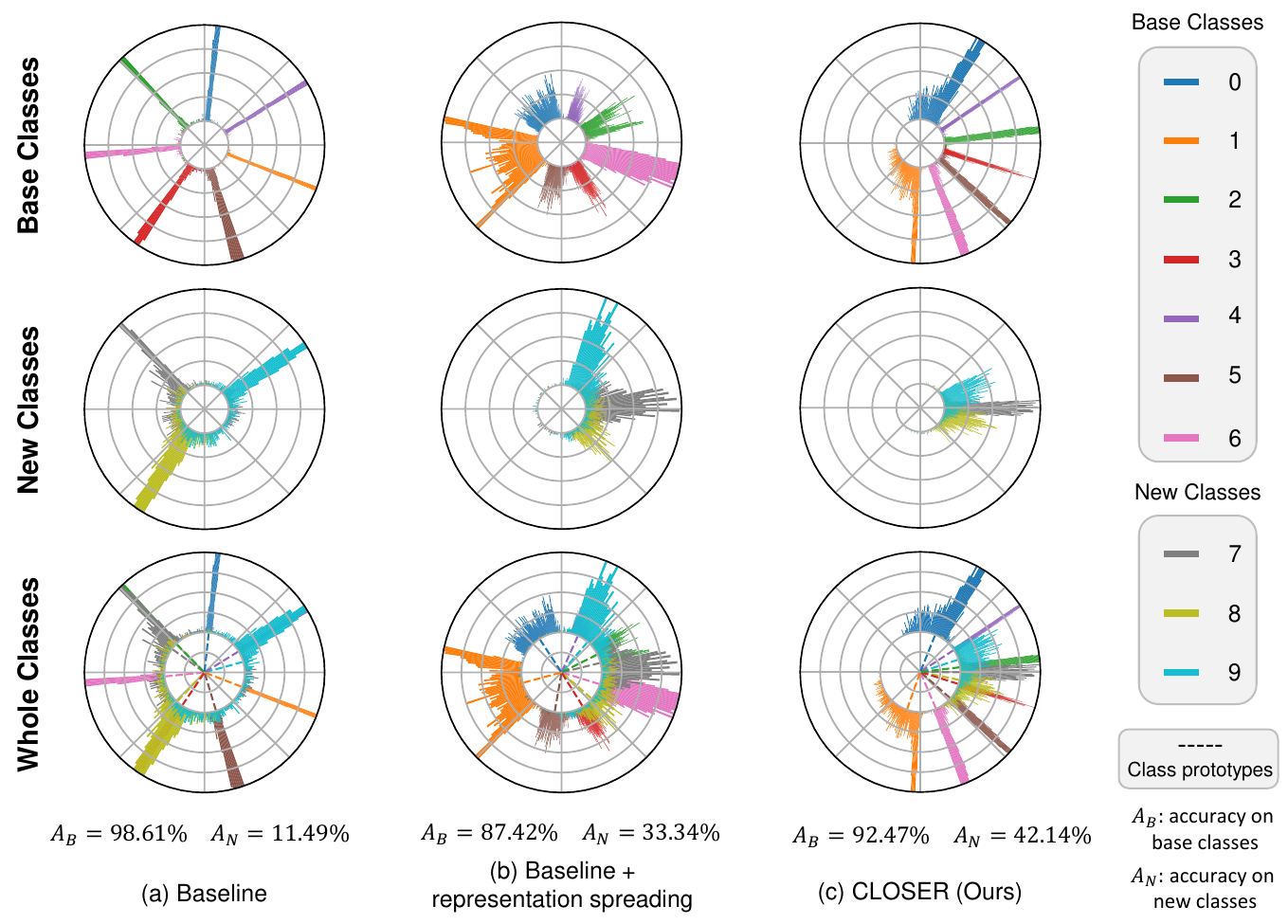}
    \caption{\textbf{Visualization of representation trained on MNIST\protect\footnotemark.}
    \textbf{(a) Baseline}~\cite{zhang2021cec,hersche2022constrained} exhibits great base-class discriminability (large inter-class distance) but weak transferability to the new classes \textcolor{black}{(huge overlap between new and base classes leading to misclassification)}.
    \textbf{(b) Baseline + representation spreading}~\cite{kornblith2021why,islam2021broad,chen2022perfectly} benefits the new classes (less collapse to the base classes), while compromising base-class discriminability in the context of FSCIL (\textcolor{black}{dispersed intra-class features leading to less accurate class representation with class prototypes}).
    \textbf{(c) CLOSER (Ours)}: 
    \textcolor{black}{Dispersing features in a narrowed feature space enhances both discriminability on the base classes (\textcolor{black}{less deviation between intra-class features and class prototypes}) and transferability to the new classes (even less overlap between the base and new classes).}
    \textcolor{black}{For instance, the \texttt{4} and \texttt{9} classes are not distinguishable in (b) and even less in (a), but CLOSER can yield representation that successfully discriminates them.}
    }
    \label{fig:main}
\end{figure*}

\footnotetext[1]{\scriptsize As in~\cite{liu2020negative,zheng2018ring,wang2018cosface}, we use an angular histogram to visualize 2D features of a DNN.}

To achieve the goal of FSCIL, we need to address catastrophic forgetting (forgetting of previous knowledge while learning new concepts)~\cite{delange2021continual,kirkpatrick2017ewc} and overfitting issues (overfitting to few examples, and thus poor generalization)~\cite{koch2015siamese,vinyals2016matching}.
To bypass this convoluted mixture of issues that hinder flexible adaptation of models, most previous works~\cite{zhang2021cec,hersche2022constrained,yang2023neural,zou2022margin,zhou2022forward,peng2022alice} fix the learned representation after training it on base (old) classes and employ a non-parametric classifier, using the feature-average class prototype representation~\cite{snell2017prototypical}.
However, such formulation leads to heavy reliance on the representation acquired through the optimization of softmax cross-entropy (SCE) loss on base classes, which often leads to collapsed intra-class representation~\cite{papyan2020prevalance} and poor transferability to new classes~\cite{islam2021broad}, as shown in Fig.~\ref{fig:main}a.
\textcolor{black}{Therefore, in this paper, we mainly focus on exploring effective representation learning methods, aiming to strike a better balance between discriminability on base classes and transferability to new classes.}

There \textcolor{black}{have} been great advances, particularly \textcolor{black}{self-supervised contrastive (SSC) learning}~\cite{chen2020simple,he2020momentum}, in the representation learning field to improve the transferability of the learned representation to downstream tasks.
Some works~\cite{islam2021broad,chen2022perfectly} have attributed the strong transferability of \textcolor{black}{SSC} learning to 'spread out' of intra-class features.
\textcolor{black}{Such representation places more emphasis on low- and mid-level features, which can be effectively transferred to and shared by new tasks.}
\textcolor{black}{Kornblith \textit{et al.}~\cite{kornblith2021why} have also found the relationship between the temperature of SCE loss and the spread of features, suggesting that lower temperature leads to better transferability.}
\textcolor{black}{As illustrated in Fig.~\ref{fig:main}b, we observe that the joint optimization of the SCE loss with low temperature and the SSC loss encourages the spread of features and better transferability to new classes.}

Despite the foregoing, \textcolor{black}{we observe that the previous methods for improving transferability are not enough to find a good representation for FSCIL; in fact, they harm the performance on base classes.}
Based on our experimental analysis, we find that excessive feature spread is very detrimental to base classes in the context of FSCIL because it hinders the feature-average class prototype from effectively representing its corresponding class, as demonstrated in Fig.~\ref{fig:main}b.
Hence, it's crucial to develop a representation learning method tailored specifically for the FCSIL problem, particularly addressing the unique challenge of simultaneously achieving discriminability on seen classes and transferability to unseen classes.

In this work, with the support from our experimental findings and information-bottleneck-theory-based analysis, we argue that the inter-class distance greatly affects the trade-off between discriminability and transferability of the learned representation in the FSCIL problem.
We find that the degraded discriminability due to the spread of features can be greatly regulated by reducing inter-class distance.
Moreover, we discover that reducing inter-class variability is also linked to enhancing the information bottleneck trade-off.
Thus, we claim that attempting to ensure the spread of features within a compressed representation space promotes learning minimal yet intrinsic task-related information.

Based on our analysis, in contrast to common beliefs and practices of previous FSCIL methods~\cite{yang2023neural,hersche2022constrained,zhou2022forward,song2023learning} that have attempted to increase the inter-class distance, we propose to \textit{decrease} it.
Incorporating SCE loss with lower temperature, SSC loss, and inter-class distance minimization, our new objective enables the learned representation to strike a better balance between discriminability and transferability, as illustrated in Fig.~\ref{fig:main}c.
With the simple yet seemingly counter-intuitive idea of bringing classes closer (hence the name \textbf{CLOSER}), the proposed method demonstrates outstanding performance, suggesting a new promising research avenue regarding representation learning for FSCIL.
\section{Related Works}
\noindent{\textbf{Few-Shot Class-Incremental Learning (FSCIL).}}
Towards the development of real-world artificial intelligence systems, Tao~\textit{et al.}~\cite{tao2020topic} have initially introduced few-shot class incremental learning, subsequently fostering numerous studies in the field~\cite{achituve2021gptree,cheraghian2021semantic,cheraghian2021synthesized,zhu2021self,shi2021overcoming,chen2021incremental,chi2022metafscil}.
Most of the works~\cite{peng2022alice,liu2022fewshot,yang2023neural,zhou2022fewshot,song2023learning,kalla2022s3c,zhuang2023gkeal,zhang2021cec,hersche2022constrained} bypass both catastrophic forgetting and overfitting issues by fixing the feature extractor trained on base classes and employing a non-parametric classifier using class prototypes~\cite{snell2017prototypical}.
Hence, recent works have focused on representation learning, where the common approach is to encourage greater separation between base classes to reserve the representation space for future new classes~\cite{yang2023neural,hersche2022constrained,zhou2022forward,song2023learning}.
However, we argue that increasing inter-class distance suppresses the acquisition of shared features among classes, which could be relevant to new classes.
Thus, contrary to the prevailing belief, we suggest learning representation with an inter-class distance minimization  and theoretically and empirically prove its effectiveness in improving the discriminability-transferability trade-off.
Similarly, Zou~\textit{et al.}~\cite{zou2022margin} emphasize the importance of learning shareable features among classes, which they propose to achieve with a negative margin~\cite{liu2020negative}.
A negative margin is more related to representation spreading rather than our idea of reducing inter-class distance, as discussed in Section~\ref{sec:supp:margintemp}.

\noindent{\textbf{Transferable Representation Learning.}}
The pursuit of representations that can be effectively transferred to downstream tasks has gained significant attention in recent years.
The early works have focused on supervised training on ImageNet dataset~\cite{ILSVRC15} and the way to transfer the knowledge to other tasks~\cite{chatfield2014return,razavian2014cnn,donahue2014decaf}.
Kornblith~\textit{et al.}~\cite{kornblith2021why} find that learned representations with better discriminability on a source dataset tend to show degraded transferability to downstream tasks.
To obtain a transferable representation, Liu~\textit{et al.}~\cite{liu2020negative,zou2022margin} suggest incorporating a negative margin in the softmax cross-entropy loss to promote feature sharing among classes rather than solely focusing on discriminative features.
In parallel, several methods have reported the strong transferability of representation yielded by self-supervised contrastive (SSC) learning~\cite{chen2020simple,he2020momentum}.
Islam~\textit{et al.}~\cite{islam2021broad} claim that the enhanced transferability acquired by SSC learning can be attributed to the spread of representation, implying that a network learns more fine-grained and shareable knowledge among tasks~\cite{chen2022perfectly}.
From the perspective of information bottleneck theory, Cui~\textit{et al.}~\cite{cui22discriminability} claim that \textit{over-compression} of mutual information between inputs and latent representations can prevent a network from learning features beneficial for downstream tasks.
In this paper, we argue that along with the representation spreading loss, directly regulating inter-class distance encourages a network to learn shareable features among classes, which could be advantageous for new classes.
Through the lens of information bottleneck theory, we theoretically analyze the connection between the joint optimization objective and the information bottleneck trade-off, supporting our claim.

\section{Proposed Method}\label{method}
\subsection{Background: Problem Formulation}\label{problem}
Following the formulation of few-shot class incremental learning (FSCIL)~\cite{tao2020topic}, we assume a sequence of training sessions with the corresponding datasets $\{\gD^{(0)},\gD^{(1)},\cdots,\gD^{(T)}\}$.
$\gD^{(t)}$ consists of training examples $\vx^{(t)}_i$ with its class labels $y_i^{(t)}\in\gC^{(t)}$ (for simplicity, we will exclude the superscript), where $\gC^{(t)}$ is the set of classes in its respective dataset $\gD^{(t)}$ and $\gC^{(s)}\cap\gC^{(t)}=\emptyset$ for $s \neq t$ (each dataset has its own distinct classes without overlap).
In the first session (a.k.a. base session) with the dataset $\gD^{(0)}$, there is assumed to be a large number of classes available with an abundant amount of training data for each class.
In subsequent sessions (a.k.a. incremental sessions) with the datasets $\gD^{(\ge1)}$, it is assumed that each dataset has a few training examples for each class.
In particular, FSCIL is said to have a $N$-way $K$-shot setting when each incremental session has $N$ classes with $K$ examples for each class.
At each $t$-th training session, only its corresponding dataset $\gD^{(t)}$ is accessible for training.
After each $t$-th session, the evaluation is performed on all previously seen classes $\gC^{(\le t)}$ using test datasets $\gD_{test}^{(\le t)}$, which consists of test examples with the class label set $\gC^{(\le t)}$.

\subsection{Background: Baseline}\label{sec:baseline}
Let a classification network consist of a feature extractor $f_{\boldsymbol{\theta}}(\cdot)$ and a classification layer with its weights $\boldsymbol{\phi}$.
The training objective of the base session is simply the softmax cross-entropy (SCE) loss with the cosine similarity $\texttt{sim}(\cdot,\cdot)$ as logits~\cite{deng2019arcface}:
\begin{equation}\label{eq:cross-entropy}
    \mathcal{L}_{\text{ce}} = \frac{1}{B} \sum_{i=1}^{B} -\log \frac{\exp(\frac{1}{\tau}\texttt{sim}(\vz_i,\boldsymbol{\phi}_{i}))}{\sum_{j = 1}^{\lvert \gC^{(0)} \rvert} \exp(\frac{1}{\tau}\texttt{sim}(\vz_i,\boldsymbol{\phi}_{j}))},
\end{equation}
where $\vz_i=f_{\boldsymbol{\theta}}(\vx_i)$; $B$ is the batch size and $\tau$ is the temperature parameter.
Incrementally updating weights with few examples in incremental sessions can make the network vulnerable to both catastrophic forgetting and overfitting.
To bypass the problems, several works~\cite{zhang2021cec,hersche2022constrained} suggest minimizing weight updates by freezing the feature extractor after the base session and using feature-average class prototype representation~\cite{snell2017prototypical}.
Specifically, after the base session, trained $\boldsymbol{\phi}$ is replaced with class prototypes, a process we refer to as classifier replacement (CR), and new-class prototypes are obtained in the subsequent incremental sessions.
The $i$-th class prototype is acquired by averaging the features of training samples of the $i$-th class:
\begin{equation}\label{eq:prototype}
    \boldsymbol{\phi}^{P}_{i} = \frac{1}{N_{c_i}} \sum\limits_{\substack{(\vx_j, y_j) \in \gD^{(\geq 0)}}} \mathbbm{1}_{[y_j = i]} f_{\boldsymbol{\theta}}(\vx_j),
\end{equation}
where $N_{c_i}$ is the number of training samples associated with the $i$-th class and $\mathbbm{1}_{[\cdot]}$ indicates 1 if the subscript condition is \texttt{True} and 0 otherwise.
For an input $\vx$, the classification score for the $i$-th class is computed by $\texttt{sim}(f_{\boldsymbol{\theta}}(\vx), \boldsymbol{\phi}^{P}_{i})$.

Although this baseline bypasses the forgetting and overfitting issues, it heavily relies on the quality of the representation trained solely on the base classes.
Consequently, the main focus of this paper is to investigate the important factors that influence representation learning for FSCIL and strategies to improve them.

\begin{figure*}[t!]
    \centering
    \includegraphics[width=0.9\linewidth]{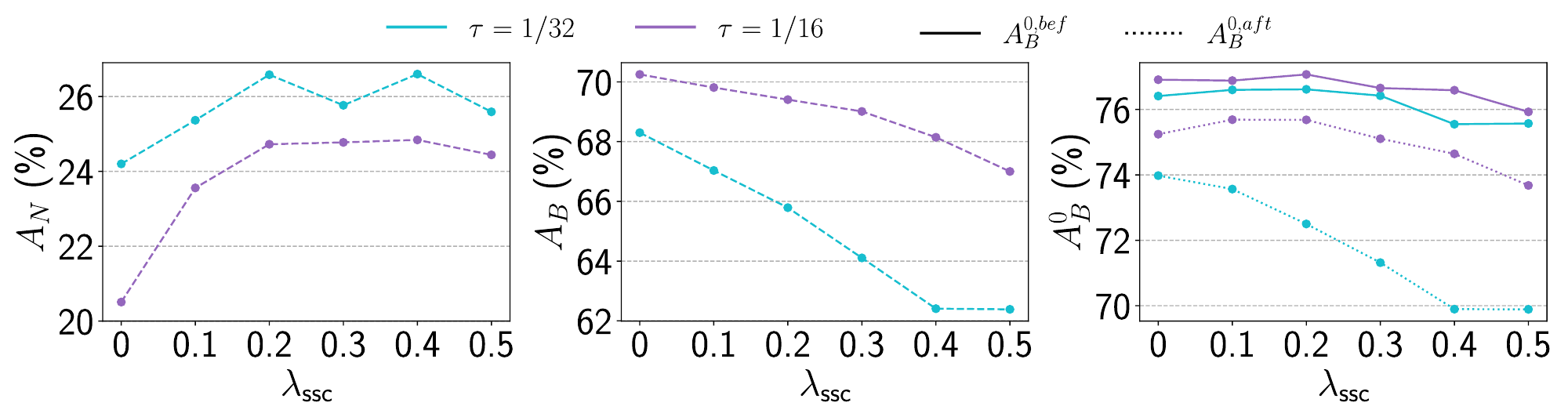}
    \caption{\textbf{The impact of the spread of representation.}
    Stronger emphasis on self-supervised contrastive loss (larger $\lambda_\text{ssc}$) and low temperature (\textcolor{SkyBlue}{skyblue}) enhances the new-class performance $A_N$ (\textbf{left}), but at the expense of base-class performance $A_B$ (\textbf{center}).
    The reduced base-class performance is mainly attributed to the excessive intra-class variation, adversely affecting the class prototype representation (\textbf{right}).
    The experiments are conducted on CIFAR100 dataset.
    }
    \label{fig:ssctemp}
\end{figure*}

\subsection{Transferability, feature spread, and its adverse effects on FSCIL}\label{sec:spread}

As shown in Fig.~\ref{fig:main}a, the baseline method exhibits a narrow intra-class distribution, widely perceived as representation collapse~\cite{papyan2020prevalance}.
Recent studies~\cite{chen2022perfectly,islam2021broad,xue2023which} have revealed that the collapsed representation shows poor transferability due to the loss of shareable low- and mid-level features that new classes can benefit from.
As such, they suggest joint optimization with a self-supervised contrastive (SSC) task~\cite{chen2020simple,he2020momentum}, which promotes the spread of features and thus the sharing of features among classes.
SSC learning optimizes infoNCE loss~\cite{oord2018representation}, regarding an augmented view from a query image as positive and the other images as negative samples.
The SSC loss for a positive pair $(i,j)$ is:
\begin{equation}
    \mathcal{L}_{\text{ssc}}^{(i,j)} = -\log \frac{\exp(\frac{1}{\tau}\texttt{sim}(\vz_i,\vz_j))}{\sum_{k=1}^{B}  \mathbbm{1}_{[k \neq i]} \exp(\frac{1}{\tau}\texttt{sim}(\vz_i,\vz_k))},
\end{equation}
where $B$ is the number of samples, including augmented images, and $\vz_j$ is the feature from an augmented view of $\vx_i$.
The loss is averaged over all positive pairs. 

In parallel, several studies have proposed to reduce the temperature and margin parameters in the softmax cross-entropy loss as another way to encourage feature sharing~\cite{liu2020negative,zou2022margin,kornblith2021why}.
Our empirical analysis in Section~\ref{sec:supp:margintemp} demonstrates that the temperature parameter affects the transferability more than the margin.
Thus, we employ a low-temperature parameter in the SCE loss to further enhance the transferability of learned representations.
Indeed, we observe that the combination of $\mathcal{L}_{\text{ssc}}$ and $\mathcal{L}_{\text{ce}}$ with low $\tau$ results in better new-class performance in the FSCIL problem due to a larger spread of features, as demonstrated in the left figure in Fig.~\ref{fig:ssctemp} and Fig.~\ref{fig:main}b.

However, as illustrated in the center figure in Fig.~\ref{fig:ssctemp}, we observe that the pursuit of better transferability results in a degradation in discriminability on the base classes.
While exploring the dilemma, we discover that the performance decline on the base classes mainly arises from the base class classifier replacement (CR) strategy, which is introduced in Section~\ref{sec:baseline}.
The figure on the right in Fig.~\ref{fig:ssctemp} shows the accuracy on the base classes subsequent to the base session training ($A_B^{0}$), without considering the new classes, both before and after CR, denoted by $A_B^{0,\text{bef}}$ and $A_B^{0,\text{aft}}$, respectively.
Before CR, $A_B^{0,\text{bef}}$ exhibits a relatively high value and a small variance as $\tau$ and $\lambda_{\text{ssc}}$ vary (indicated by the solid lines).
Conversely, $A_B^{0,\text{aft}}$ is considerably lower than $A_B^{0,\text{bef}}$ with a relatively higher variance (indicated by the dotted lines).
Considering that CR is introduced to bypass overfitting and catastrophic forgetting in the FSCIL problem, these findings suggest that the previous methods for improving the transferability of representations are not effective in enhancing the trade-off between transferability and discriminability in the context of the FSCIL problem.
Thus, it is essential to develop a representation learning method tailored for the FSCIL problem.

\begin{figure}[t!]
    \centering
    \includegraphics[width=0.5\textwidth]{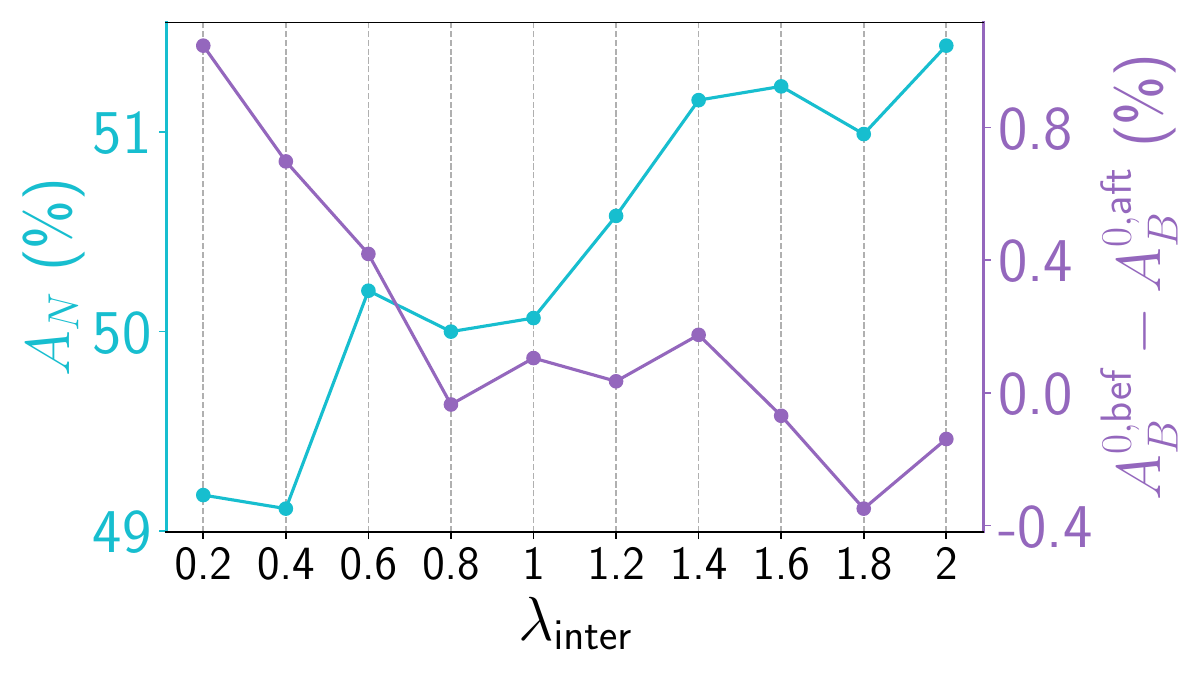}
    \caption{\textbf{Effect of minimizing inter-class distance.}
    As the weight of $\mathcal{L}_{\text{inter}}$, denoted by $\lambda_{\text{inter}}$, increases, the performance on the new classes increases (\textcolor{SkyBlue}{skyblue}) and the performance loss on the base classes induced by CR is greatly alleviated (\textcolor{violet}{purple}).
    The experiments are conducted on CUB200 dataset.
}
\label{fig:intra_inter}
\end{figure}
\subsection{Inter-Class Distance Matters}\label{sec:inter}
\textcolor{black}{As discussed in Section~\ref{sec:spread}, learning shareable features through representation spreading proves advantageous for transferability.
Yet, it also harms discriminability in the process of CR.}
Based on the observation in Fig.~\ref{fig:main}b, we hypothesize that such a dilemma appears to arise from a large inter-class distance since the representation spreading could push features into the extensive inter-class space, which may dilute the information on the base classes.
Furthermore, we argue that the large inter-class distance may impede effective feature sharing among classes, undermining the transferability of learned representations.
Consequently, to retain the knowledge on base classes while promoting effective feature sharing, we introduce a novel loss function that minimizes the inter-class distance:
\begin{equation}
    \mathcal{L}_{\text{inter}} = - \frac{1}{\sum\limits_{i=1}^B \sum\limits_{j>i}^B \mathbbm{1}_{[y_i \neq y_j]}} \sum\limits_{i=1}^B \sum\limits_{j>i}^B \mathbbm{1}_{[y_i \neq y_j]} \texttt{sim}(\vz_i,\vz_j).
\end{equation}
Fig.~\ref{fig:main}c displays that the spread of intra-class features is well regulated by applying $\mathcal{L}_{\text{inter}}$ and Fig.~\ref{fig:intra_inter} demonstrates the performance decline from CR is alleviated by minimizing $\mathcal{L}_{\text{inter}}$ (indicated by the \textcolor{violet}{purple} line).
Moreover, the results indicated by the \textcolor{SkyBlue}{skyblue} line show that reducing inter-class distance improves the performance on the new classes, corroborating our hypothesis.

\begin{figure*}[t!]
    \centering
    \begin{subfigure}[t!]{0.32\linewidth}
    	\includegraphics[width=\linewidth]{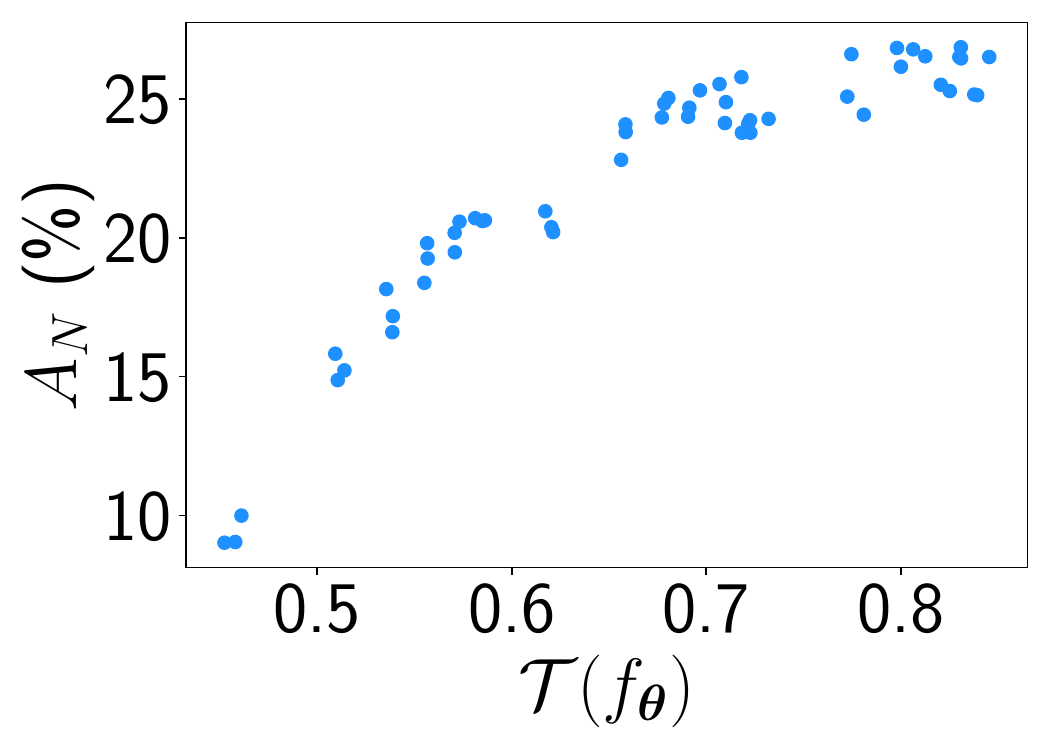}
	   \caption{}
    	\label{fig:sanity_test}
    \end{subfigure}
    \begin{subfigure}[t!]{0.32\linewidth}
	   \includegraphics[width=\linewidth]{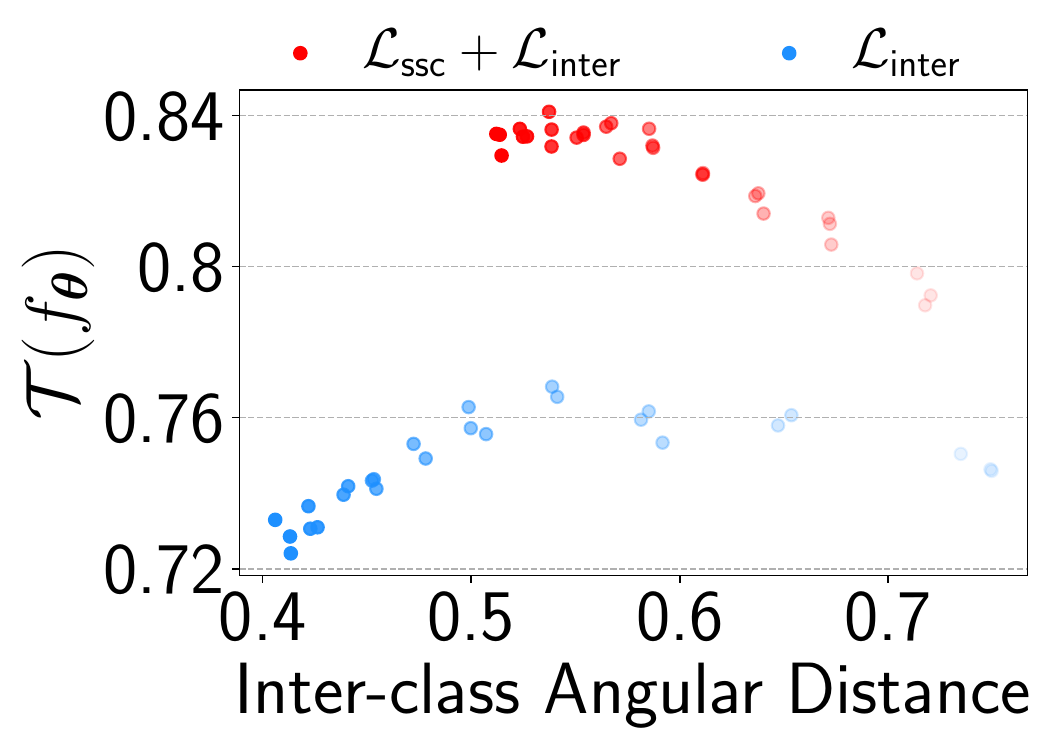}
	   \caption{}
	   \label{fig:inter_T}
    \end{subfigure}
    \begin{subfigure}[t!]{0.32\linewidth}
	   \includegraphics[width=\linewidth]{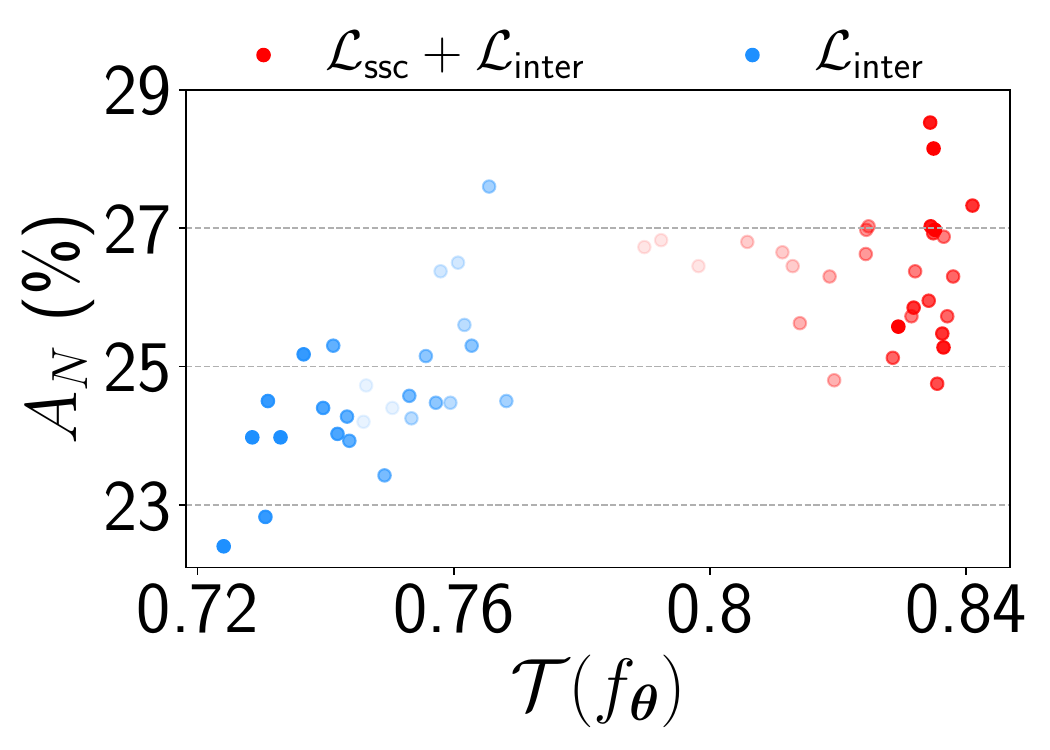}
	   \caption{}
	   \label{fig:T_new}
    \end{subfigure}
    \caption{\textbf{(a) Sanity test for $\gT(f_{\boldsymbol{\theta}})$}:
    $\gT(f_{\boldsymbol{\theta}})$ has a positive correlation with the performance on the new classes.
    Each data point is obtained by different configurations of $\tau$ and $\lambda_{\text{ssc}}$ (without $\mathcal{L}_{\text{inter}}$).
    \textbf{(b),(c) Relationship between inter-class distance, $\gT(f_{\boldsymbol{\theta}})$, and $A_N$}: Integrated with the representation spreading, reducing inter-class distance encourages better transferability (\textcolor{red}{red} points).
    However, the tendency is broken when reducing inter-class distance without representation spreading (\textcolor{NavyBlue}{blue} points).
    Please refer to Section~\ref{sec:ib} for theoretical support for these observations.
    The dots with greater transparency correspond to smaller $\lambda_{\text{inter}}$, ranging from 0 to 1 with intervals of 0.1.
    We set $\lambda_{\text{ssc}}$ as 0.1 when it is used.
    The experiments are conducted on CIFAR100 dataset.
    }
    \label{fig:T_analysis}
\end{figure*}
Our assertion initially seems counter-intuitive, diverging from the common belief of prior works~\cite{yang2023neural,hersche2022constrained,zhou2022forward,song2023learning} that maximizing inter-class distance may be beneficial for reserving representation space for future new classes.
To further validate the efficacy of reducing inter-class distance with respect to transferability, we propose a measure to quantify how new class samples are distinct from base-class representations.
We define the measure as the averaged relative angular distance between a new-class sample and its nearest base-class prototype with respect to the averaged angular distance among all base-class prototype pairs:
\begin{equation}\label{eq:metric}
    \gT(f_{\boldsymbol{\theta}}) = \frac{\frac{1}{\lvert \gD_{test}^{(> 0)} \rvert} \sum_{(\vx_j, y_j) \in \gD_{test}^{(> 0)}} \min\limits_{i} \angle(\vz_j, \boldsymbol{\phi}^P_{base,i})}{\sum_{j=1}^{\lvert \gC^{(0)} \rvert} \sum_{k>j}^{\lvert \gC^{(0)} \rvert} \angle(\boldsymbol{\phi}^P_j,\boldsymbol{\phi}^P_k) \: / \: \binom{\lvert \gC^{(0)} \rvert}{2}},
\end{equation}
where $\boldsymbol{\phi}^P_{base,i}$ and $\angle(\cdot,\cdot)$ indicate the $i$-th base-class prototype and the angular distance between two input vectors, respectively.
We introduce the denominator to normalize the varying sizes of the representation space depending on methods.
If the representation produced by $f_{\boldsymbol{\theta}}$ has a distinguishable representation for new classes, $\gT(f_{\boldsymbol{\theta}})$ would be large.
For the sanity test, we check the relationship between $\gT(f_{\boldsymbol{\theta}})$ and the accuracy of the new class, which is shown in Fig.~\ref{fig:sanity_test}.

Using this measure, we analyze the relationship between inter-class distance, the spread of features, and the transferability of learned representation.
To do so, we train the feature extractor using $\mathcal{L}_{\text{ce}}$ with a low temperature, $\mathcal{L}_{\text{ssc}}$, and $\mathcal{L}_{\text{inter}}$ with varying loss weights for $\mathcal{L}_{\text{ssc}}$ and $\mathcal{L}_{\text{inter}}$, denoted by $\lambda_{\text{ssc}}$ and $\lambda_{\text{inter}}$, respectively.
Fig.~\ref{fig:inter_T} and Fig.~\ref{fig:T_new} show the relationship between the inter-class distance, $\gT(f_{\boldsymbol{\theta}})$, and the performance on the new classes.
The results demonstrate that the joint optimization of $\mathcal{L}_{\text{ssc}}$ and $\mathcal{L}_{\text{inter}}$ leads to an increase in both $\gT(f_{\boldsymbol{\theta}})$ and $A_N$, indicating that smaller inter-class distances lead to more discriminability between base classes and new classes.
The analysis corroborates our seemingly counter-intuitive claim that the closer classes are, the better.

In summary, we have observed that the spread of representation achieved by employing a low temperature in the SCE loss and self-supervised contrastive loss is advantageous for learning transferable representation.
However, spread representation itself cannot address the trade-off between transferability to new classes and discriminability on base classes in the context of the FSCIL problem.
Our analysis demonstrates that decreasing inter-class distance enhances discriminability by regularizing the intra-class spread and improves the transferability by promoting the effective learning of shareable information among classes when combined with the feature-spread-encouraging loss.
Consequently, our final loss function is the combination of cross-entropy loss with a lower temperature parameter, self-supervised contrastive loss, and inter-class distance minimizing loss:
\begin{equation}\label{eq:final}
    \mathcal{L} = \mathcal{L}_{\text{ce}} + \lambda_{\text{ssc}} \mathcal{L}_{\text{ssc}} + \lambda_{\text{inter}}\mathcal{L}_{\text{inter}}.
\end{equation}

\subsection{Information Bottleneck Theory Perspective}\label{sec:ib}
In this subsection, we provide theoretical support for the proposed representation learning objective in Eq.~\eqref{eq:final} from the perspective of the information bottleneck (IB) theory~\cite{tishby2015deep,alemi2017deep,shwartzziv2017opening}.
The IB theory describes the goal of representation learning as finding a good trade-off between complexity and accuracy through finding minimal information from inputs necessary to preserve maximal information about the targets.
As discussed by Cui~\textit{et al.}~\cite{cui22discriminability}, we consider the objective of IB theory to be closely related to learning transferable representations since while achieving the objective, a network could be guided to learn intrinsic knowledge rather than task-irrelevant shortcuts~\cite{geirhos2020shortcut}, leading to better transferability.

For an image classification task, let $X \in\mathbb{R}^{h\times w\times 3}$, $Y \in\mathbb{R}^C$, and $Z=\frac{f_{\boldsymbol{\theta}}(X)}{\lVert f_{\boldsymbol{\theta}}(X) \rVert} \in \mathbb{R}^d$ denote the input, label, and normalized latent representation variables, respectively, where $h$ and $w$ are the spatial sizes of the image, $C$ is the number of classes, and $d$ is the dimension of the latent representations.
The aforementioned trade-off has been formulated as the following objective:
\begin{equation}\label{eq:ib_inequality}
    \max I(Y;Z)-\beta I(X;Z),
\end{equation}
where $I(\cdot;\cdot)$ indicates the mutual information between two random variables and $\beta>0$ is a Lagrange multiplier.
In this work, we consider an alternative trade-off objective, $\max \frac{I(Y;Z)}{\beta I(X;Z)}$, which is adopted by several works~\cite{slonim1999agglomerative,Ngampruetikorn2022information}.
After omitting $\beta$ for simplicity and modest assumptions, we derive the following lower bound of the IB trade-off objective:
\begin{equation}\label{eq:ib_inequality}
    \frac{I(Y;Z)}{I(X;Z)} \geq 1 - \frac{d\cdot\log(2\pi e)+\frac{1}{C} \sum\limits_{i=1}^{C} \log\lvert \Sigma_{W_i} \rvert}{d\cdot\log(2\pi e) + \log\lvert \Sigma_{T} \rvert},
\end{equation}
where $\Sigma_{W_i}$ and $\Sigma_T$ indicate the covariance matrices of $Z$ within the $i$-th class and overall classes, respectively.
As proven by Lemma~\ref{sec:supple:proof}, both the numerator and denominator in the fractional term of the lower bound in Eq.~\eqref{eq:ib_inequality} are negative, leading to the following theorem:
\begin{theorem}\label{theorem:main}
The lower bound of $\frac{I(Y;Z)}{I(X;Z)}$ in \text{Eq.~\eqref{eq:ib_inequality}} is a monotonically increasing function of $\lvert \Sigma_{W_i} \rvert$ and a monotonically decreasing function of $\lvert \Sigma_{T} \rvert$.
\end{theorem}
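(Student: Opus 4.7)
The plan is to treat the lower bound as a smooth function of the two scalar determinants $|\Sigma_{W_i}|$ (for each $i$) and $|\Sigma_T|$, and verify the two monotonicity claims by a single partial-derivative computation in each case, using only the sign information already guaranteed by the lemma the theorem invokes.

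First I would introduce shorthand: let $a := d\log(2\pi e)$, $N := a + \frac{1}{C}\sum_{j=1}^{C} \log|\Sigma_{W_j}|$, and $D := a + \log|\Sigma_T|$, so that the lower bound in \eqref{eq:ib_inequality} reads $L = 1 - N/D$. By the referenced Lemma, both $N<0$ and $D<0$, so $N/D$ is strictly positive and $L$ is well-defined on the domain of interest. For the first claim, note that $\log|\Sigma_{W_i}|$ enters only through $N$ with coefficient $1/C$; the chain rule then gives $\partial L/\partial |\Sigma_{W_i}| = -1/(C\, D\, |\Sigma_{W_i}|)$, which is strictly positive because $D<0$ and $|\Sigma_{W_i}|>0$, so $L$ is monotonically increasing in $|\Sigma_{W_i}|$. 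For the second claim, $\log|\Sigma_T|$ enters only through $D$, and since $N$ is independent of $|\Sigma_T|$ the quotient rule yields $\partial L/\partial |\Sigma_T| = N/(D^2\,|\Sigma_T|)$; here $D^2>0$ and $|\Sigma_T|>0$, while $N<0$, so the derivative is strictly negative and $L$ is monotonically decreasing in $|\Sigma_T|$.

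The calculation itself is routine once the signs of $N$ and $D$ are known; the entire non-trivial content is therefore routed through the cited Lemma. The only point requiring care is the tacit positive-definiteness of the within-class and total covariance matrices, which is needed to ensure that the log-determinants are finite and the partial derivatives well-defined; this is already in force wherever \eqref{eq:ib_inequality} itself is meaningful, so no new argument is required here. I do not anticipate a substantive obstacle beyond keeping the sign bookkeeping consistent throughout.
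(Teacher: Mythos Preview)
Your proposal is correct and follows essentially the same route as the paper: both arguments reduce the monotonicity claim to the sign information $N<0$ and $D<0$ supplied by the cited Lemma, and then observe that $1-N/D$ is increasing in the numerator variable and decreasing in the denominator variable. Your version is simply more explicit, carrying out the partial-derivative bookkeeping that the paper compresses into a single sentence, but there is no substantive difference in approach.
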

The detailed derivations are presented in Section~\ref{sec:supple:proof}.

\begin{table*}[t!]
   \centering
   \scriptsize
   \caption{\textbf{10-way 5-shot incremental learning results on CUB200.}}
   \resizebox{0.9\linewidth}{!}{
   \begin{tabular}{lcccccccccccc}
   \toprule[\heavyrulewidth]
        \multicolumn{1}{c}{\multirow{2}{*}{Method}} & \multicolumn{11}{c}{Acc. in each session (\%) } &
        \multicolumn{1}{c}{\multirow{2}{*}{PD (\%) $\downarrow$ }}\\
        \cmidrule{2-12}
        \multicolumn{1}{c}{} & 0 & 1 & 2 & 3 & 4 & 5 & 6 & 7 & 8 & 9 & 10\\
   \toprule
   Baseline & 79.92 & 76.23 & 73.18 & 69.45 & 67.83 & 65.74 & 64.54 & 63.33 & 61.56 & 61.27 & 60.10 & 19.83\\
   \toprule
   TOPIC~\cite{tao2020topic} & 68.68 & 62.49 & 54.81 & 49.99 & 45.25 & 41.40 & 38.35 & 35.36 & 32.22 & 28.31 & 26.26 & 42.42\\
   F2M~\cite{shi2021overcoming} & 81.07 & 78.16 & 75.57 & 72.89 & 70.86 & 68.17 & 67.01 & 65.26 & 63.36 & 61.76 & 60.26 & 20.81\\
   CEC~\cite{zhang2021cec} & 75.85 & 71.94 & 68.50 & 63.50 & 62.43 & 58.27 & 57.73 & 55.81 & 54.83 & 53.52 & 52.28 & 23.57\\
   IDLVQ-C~\cite{chen2021incremental} & 77.37 & 74.72 & 70.28 & 67.13 & 65.34 & 63.52 & 62.10 & 61.54 & 59.04 & 58.68 & 57.81 & 19.56\\
   ALICE~\cite{peng2022fewshot} & 77.40 & 72.70 & 70.60 & 67.20 & 65.90 & 63.40 & 62.90 & 61.90 & 60.50 & 60.60 & 60.10 & \underline{17.30}\\
   CLOM~\cite{zou2022margin}& 79.57 & 76.07 & 72.94 & 69.82 & 67.80 & 65.56 & 63.94 & 62.59 & 60.62 & 60.34 & 59.58 & 19.99 \\
   Entropy Reg.~\cite{liu2022fewshot}& 75.90 & 72.14 & 68.64 & 63.76 & 62.58 & 59.11 & 57.82 & 55.89 & 54.92 & 53.58 & 52.39 & 23.51 \\
   LIMIT~\cite{zhou2022fewshot} & 75.89 & 73.55 & 71.99 & 68.14 & 67.42 & 63.61 & 62.40 & 61.35 & 59.91 & 58.66 & 57.41 & 18.48\\
   MetaFSCIL~\cite{chi2022metafscil} & 75.90 & 72.41 & 68.78 & 64.78 & 62.96 & 59.99 & 58.30 & 56.85 & 54.78 & 53.82 & 52.64 & 23.26\\
   FACT~\cite{zhou2022forward} & 75.90 & 73.23 & 70.84 & 66.13 & 65.56 & 62.15 & 61.74 & 59.83 & 58.41 & 57.89 & 56.94 & 18.96\\
   S3C~\cite{kalla2022s3c} & 80.62 & 77.55 & 73.19 & 68.54 & 68.05 & 64.33 & 63.58 & 62.07 & 60.61 & 59.79 & 58.95 & 20.83\\
   SAVC~\cite{song2023learning} & 81.85 & 77.92 & 74.95 & 70.21 & 69.96 & 67.02 & 66.16 & 65.30 & 63.84 & 63.15 & \underline{62.50} & 19.35\\
   NC-FSCIL~\cite{yang2023neural} & 80.45 & 75.98 & 72.30 & 70.28 & 68.17 & 65.16 & 64.43 & 63.25 & 60.66 & 60.01 & 59.44 & 21.01\\ 
   GKEAL~\cite{zhuang2023gkeal}& 78.88 & 75.62 & 72.32 & 68.62 & 67.23 & 64.26 & 62.98 & 61.89 & 60.20 & 59.21 & 58.67 & 20.21\\
   CABD~\cite{zhao2023fewshot}& 79.12 & 75.37 & 72.80 & 69.05 & 67.53 & 65.12 & 64.00 & 63.51 & 61.87 & 61.47 & 60.93 & 18.19\\
   \toprule
   \textbf{CLOSER (Ours)}& 79.40 & 75.92 & 73.50 & 70.47 & 69.24 & 67.22 & 66.73 & 65.69 & 64.00 & 64.02 & \textbf{63.58} & \textbf{15.82}\\
   \bottomrule[\heavyrulewidth]
   \end{tabular}
   }
   \label{tab:cub200}
\end{table*}

Generally, the determinant of a covariance matrix is correlated with the spread or variability of variables across all dimensions.
Thus, Theorem~\ref{theorem:main} suggests that the IB trade-off can be enhanced by increasing the intra-class variability ($\mathcal{L}_{\text{ce}}$ with a low temperature and $\mathcal{L}_{\text{ssc}}$) while suppressing the overall representation space ($\mathcal{L}_{\text{inter}}$), supporting the proposed learning objective in Eq.~\eqref{eq:final} and our claims for inter-class distance minimization.
Intuitively, our objective functions encourage greater overlap in representations among different classes, promoting more shareable and compact representations.
Our theoretic analysis also underpins the findings in Fig.~\ref{fig:T_analysis} that reducing inter-class distance without representation spreading loss is observed to result in less transferable representations.
Since compressing the overall feature space diminishes the intra-class variability, the IB trade-off may decrease, leading to degraded transferability.

\begin{table*}[t!]
   \centering
   \scriptsize
   \caption{\textbf{5-way 5-shot incremental learning results on CIFAR100.}}
   \resizebox{0.8\linewidth}{!}{
   \begin{tabular}{lcccccccccc}
   \toprule[\heavyrulewidth]
        \multicolumn{1}{c}{\multirow{2}{*}{Method}} & \multicolumn{9}{c}{Acc. in each session (\%) } &
        \multicolumn{1}{c}{\multirow{2}{*}{PD (\%) $\downarrow$ }}\\
        \cmidrule{2-10}
        \multicolumn{1}{c}{} & 0 & 1 & 2 & 3 & 4 & 5 & 6 & 7 & 8 &\\
   \toprule
   Baseline & 72.93 & 68.46 & 64.26 & 60.15 & 56.53 & 53.60 & 51.51 & 49.19 & 47.09 & 25.84\\
   \toprule
   ERDIL~\cite{dong2021few} & 73.62 & 68.22 & 65.14 & 61.84 & 58.35 & 55.54 & 52.51 & 50.16 & 48.23 & 25.39\\
   CEC~\cite{zhang2021cec} & 73.07 & 68.88 & 65.26 & 61.19 & 58.09 & 55.57 & 53.22 & 51.34 & 49.14& 23.93\\
   CLOM~\cite{zou2022margin} & 74.20 & 69.83 & 66.17 & 62.39 & 59.26 & 56.48 & 54.36 & 52.16 & 50.25 & 23.95\\
   Entropy Reg.~\cite{liu2022fewshot} & 74.4 & 70.2 & 66.54 & 62.51 & 59.71 & 56.58 & 54.52 & 52.39 & 50.14 & 24.26 \\
   FACT~\cite{zhou2022forward} & 74.60 & 72.09 & 67.56 & 63.52 & 61.38 & 58.36 & 56.28 & 54.24 & \underline{52.10} & \underline{22.50}\\
   LIMIT~\cite{zhou2022fewshot} & 73.81 & 72.09 & 67.87 & 63.89 & 60.70 & 57.77 & 55.67 & 53.52 & 51.23 & 22.58\\
   MetaFSCIL~\cite{chi2022metafscil} & 74.50 & 70.10 & 66.84 & 62.77 & 59.48 & 56.52 & 54.36 & 52.56 & 49.97 & 24.53\\
   GKEAL~\cite{zhuang2023gkeal} & 74.01 & 70.45 & 67.01 & 63.08 & 60.01 & 57.30 & 55.50 & 53.39 & 51.40 & 22.61\\
   \toprule
   \textbf{CLOSER (Ours)}& 75.72 & 71.83 & 68.32 & 64.62 & 61.91 & 59.25 & 57.53 & 55.43 & \textbf{53.32} & \textbf{22.40}\\
   \bottomrule[\heavyrulewidth]
   \end{tabular}
   }
   \label{tab:cifar}
\end{table*}
\begin{table*}[t!]
   \centering
   \scriptsize
   \caption{\textbf{5-way 5-shot incremental learning results on \textit{mini}ImageNet.}}
   \resizebox{0.8\linewidth}{!}{
   \begin{tabular}{lcccccccccc}
   \toprule[\heavyrulewidth]
        \multicolumn{1}{c}{\multirow{2}{*}{Method}} & \multicolumn{9}{c}{Acc. in each session (\%) } &
        \multicolumn{1}{c}{\multirow{2}{*}{PD (\%) $\downarrow$ }}\\
        \cmidrule{2-10}
        \multicolumn{1}{c}{} & 0 & 1 & 2 & 3 & 4 & 5 & 6 & 7 & 8 &\\
   \toprule
   Baseline & 72.27 & 67.46 & 63.26 & 59.73 & 56.56 & 53.53 & 50.90 & 48.93 & 47.26 & 25.01\\
   \toprule
   TOPIC~\cite{tao2020topic} & 61.31 & 50.09 & 45.17 & 41.16 & 37.48 & 35.52 & 32.19 & 29.46 & 24.42 & 36.89\\
   F2M~\cite{shi2021overcoming} & 67.28 & 63.80 & 60.38 & 57.06 & 54.08 & 51.39 & 48.82 & 46.58 & 44.65 & 22.63\\
   CEC~\cite{zhang2021cec} & 72.00 & 66.83 & 62.97 & 59.43 & 56.70 & 53.73 & 51.19 & 49.24 & 47.63 & 24.37\\
   IDLVQ-C~\cite{chen2021incremental} & 64.77 & 59.87 & 55.93 & 52.62 & 49.88 & 47.55 & 44.83 & 43.14 & 41.84 & 22.93\\
   Subspace Reg.~\cite{akyurek2022subspace} & 80.37 & 73.76 & 68.36 & 64.07 & 60.36 & 56.27 & 53.10 & 50.45 & 47.55 & 32.83\\
   ALICE~\cite{peng2022fewshot} & 80.60 & 70.60 & 67.40 & 64.50 & 62.50 & 60.00 & 57.80 & 56.80 & \textbf{55.70} & 24.90 \\
   CLOM~\cite{zou2022margin}& 73.08 & 68.09 & 64.16 & 60.41 & 57.41 & 54.29 & 51.54 & 49.37 & 48.00 & 25.08 \\
   Entropy Reg.~\cite{liu2022fewshot}& 71.84 & 67.12 & 63.21 & 59.77 & 57.01 & 53.95 & 51.55 & 49.52 & 48.21 & 23.63 \\
   LIMIT~\cite{zhou2022fewshot} & 72.32 & 68.47 & 64.30 & 60.78 & 57.95 & 55.07 & 52.70 & 50.72 & 49.14 & 23.13\\
   MetaFSCIL~\cite{chi2022metafscil} & 72.04 & 67.94 & 63.77 & 60.29 & 57.58 & 55.16 & 52.90 & 50.79 & 49.19 & 22.85\\
   FACT~\cite{zhou2022forward} & 72.56 & 69.63 & 66.38 & 62.77 & 60.60 & 57.33 & 54.34 & 52.16 & 50.49 & \textbf{22.07}\\
   GKEAL~\cite{zhuang2023gkeal}& 73.59 & 68.90 & 65.33 & 62.29 & 59.39 & 56.70 & 54.20 & 52.59 & 51.31 & \underline{22.28}\\
   CABD~\cite{zhao2023fewshot}& 74.65 & 70.43 & 66.29 & 62.77 & 60.75 & 57.24 & 54.79 & 53.65 & 52.22 & 22.43\\
   \toprule
   \textbf{CLOSER (Ours)}& 76.02 & 71.61 & 67.99 & 64.69 & 61.70 & 58.94 & 56.23 & 54.52 & \underline{53.33} & 22.69\\
   \bottomrule[\heavyrulewidth]
   \end{tabular}
   }
   \label{tab:mini}
\end{table*}

\section{Experiments}
\subsection{Experimental Details}\label{subsec:exp_details}

\noindent{\textbf{Dataset.}}
Following the benchmark settings proposed by Tao~\textit{et al.}~\cite{tao2020topic}, we evaluate the proposed method on CIFAR100~\cite{krizhevsky2009learning}, \textit{mini}ImageNet~\cite{vinyals2016matching}, and CUB200~\cite{WahCUB_200_2011}.
For CIFAR100 and \textit{mini}ImageNet, the total number of classes is 100: 60 base classes and 40 new classes.
The 40 new classes are split into 8 disjoint sets of 5 classes, each of which is sequentially provided with 5 training examples per class (5-way 5-shot) in each incremental session.
As for CUB200, there total number of classes is 200, with 100 base classes and 100 new classes.
100 new classes are split into 10 disjoint sets of 10 classes, each of which is sequentially provided with 5 training examples per class (10-way 5-shot) in each incremental session.

\noindent{\textbf{Implementation.}}
Following Zhang~\textit{et al.}~\cite{zhang2021cec}, we use ResNet-20~\cite{he2016resnet} for CIFAR100 experiments and ResNet-18~\cite{he2016resnet} for both \textit{mini}ImageNet and CUB200 experiments.
We follow the conventions to use the ResNet-18 model pre-trained on the ImageNet dataset~\cite{ILSVRC15} for CUB200.
\textcolor{black}{We set the mini-batch size to 128, 128, and 256 for CIFAR100, \textit{mini}ImageNet, and CUB200 experiments, respectively.}
The temperature parameter $\tau$ for the baseline method is $1/16$ and `low temperature' indicates $\tau=1/32$.
We set $\lambda_{\text{ssc}}$ as $0.1$,$0.1$, and $0.01$ for CIFAR100, \textit{mini}ImageNet, and CUB200, respectively, and $\lambda_{\text{inter}}$ as $1$, $0.5$, and $1.5$ for CIFAR100, \textit{mini}ImageNet, and CUB200, respectively.
These hyper-parameters are searched via validation using synthesized validation sets.
For mutual information estimation, we adopt MINE~\cite{belghazi18mutual} method and implement it based on the open-source code\footnote{\scriptsize \href{https://github.com/gtegner/mine-pytorch}{https://github.com/gtegner/mine-pytorch}}.
More details are provided in Section~\ref{sec:supple:details}.

\noindent{\textbf{Evaluation.}}
We use the accuracy on base ($A_B$), new ($A_N$), and the whole classes ($A_W$) as metrics to assess the discriminability, transferability, and the trade-off between them in learned representations.
Additionally, we use the performance drop (PD) between the accuracy at the end of the base session (session 0) and the accuracy at the last incremental session to evaluate the degree to which old knowledge is forgotten and new knowledge is learned simultaneously.
All experimental results of CLOSER are obtained by averaging 3 trials.

\begin{table*}[t!]
\centering
\scriptsize
\caption{\textbf{Ablation studies on CIFAR100.}}
\resizebox{0.55\linewidth}{!}{
    \begin{tabular}{cccccccc}
      \toprule
      low $\tau$ & $\mathcal{L}_{\text{ssc}}$ & $\mathcal{L}_{\text{inter}}$ & $A_B$ (\%) & $A_N$ (\%) & $A_W$ (\%) & PD (\%)\\
      \midrule
      \color{black}\xmark & \color{black}\xmark & \color{black}\xmark & 70.13 & 20.95 & 50.46 & 24.81\\
      \color{black}\xmark & \color{green}\checkmark & \color{black}\xmark & 69.95 & 22.80 & 51.09 & 24.59\\
      \color{black}\xmark & \color{black}\xmark & {\color{green}\checkmark} & 69.95 & 20.05 & 49.99 & 26.78\\
      \color{black}\xmark & {\color{green}\checkmark} & {\color{green}\checkmark} & 71.43 & 22.83 & 51.99 & 25.26\\
      \color{green}\checkmark & \color{black}\xmark & \color{black}\xmark & 68.33 & 24.10 & 50.64 & 23.23\\
      \color{green}\checkmark & \color{green}\checkmark & \color{black}\xmark & 66.58 & 25.08 & 49.98 & 23.15\\
      {\color{green}\checkmark} & \color{black}\xmark & {\color{green}\checkmark} & 70.17 & 22.40 & 51.06 & 25.27\\
      {\color{green}\checkmark} & {\color{green}\checkmark} & {\color{green}\checkmark} & 70.72 & 27.23 & \textbf{53.32} & \textbf{22.40}\\
      \bottomrule
    \end{tabular}}\label{tab:ablation}
\end{table*}

\subsection{Comparison with the Existing Works}
We compare the proposed method, dubbed \textbf{CLOSER}, with prior arts on CUB200 (Table~\ref{tab:cub200}), CIFAR100 (Table~\ref{tab:cifar}), and \textit{mini}ImageNet (Table~\ref{tab:mini}).
We observe that CLOSER achieves state-of-the-art performance on both CUB200 and CIFAR100 datasets, surpassing the results of previous methods by a large margin with respect to $A_W$ and PD.
With \textit{mini}ImageNet, the proposed method exhibits substantially higher $A_W$ than the method with the lowest PD and achieves lower PD than the method with the highest $A_W$, which means that the proposed method achieves a better balance between the performance on base and the new classes.
It is worth noting that CLOSER shows such outstanding performance without any assistance from the storage of previous samples (F2M, ERDIL, IDLVQ-C, and CABD), additional computational modules (CEC, CLOM, NC-FSCIL, SAVC, and MetaFSCIL), and test-time data augmentation (S3C and SAVC), suggesting the critical importance of learning effective representations in FSCIL.

\subsection{Ablation Studies}
To verify the efficacy of the individual components in the proposed method, we perform ablation studies, which are shown in Table~\ref{tab:ablation}.
The increase in $A_N$ when employing a lower temperature in the softmax cross-entropy loss (low $\tau$) or a self-supervised contrastive loss ($\mathcal{L}_\text{ssc}$) confirms the advantage of feature spread in enhancing the transferability of representations.
While transferability can be greatly improved by using low $\tau$ and $\mathcal{L}_\text{ssc}$, a substantial decline in base-class performance $A_B$ is observed, as discussed in Section~\ref{sec:inter}.
The result from the case where all components are utilized demonstrates that this issue can be effectively resolved by minimizing inter-class distance.
Furthermore, as discussed in Section~\ref{sec:inter}, reducing inter-class distance is observed to improve transferability, especially when used with the representation spreading methods.
The comprehensive results of ablation studies confirm and support our claims.

\begin{figure*}[t!]
    \centering
    \begin{subfigure}[t!]{0.3\linewidth}
    	\includegraphics[width=\linewidth]{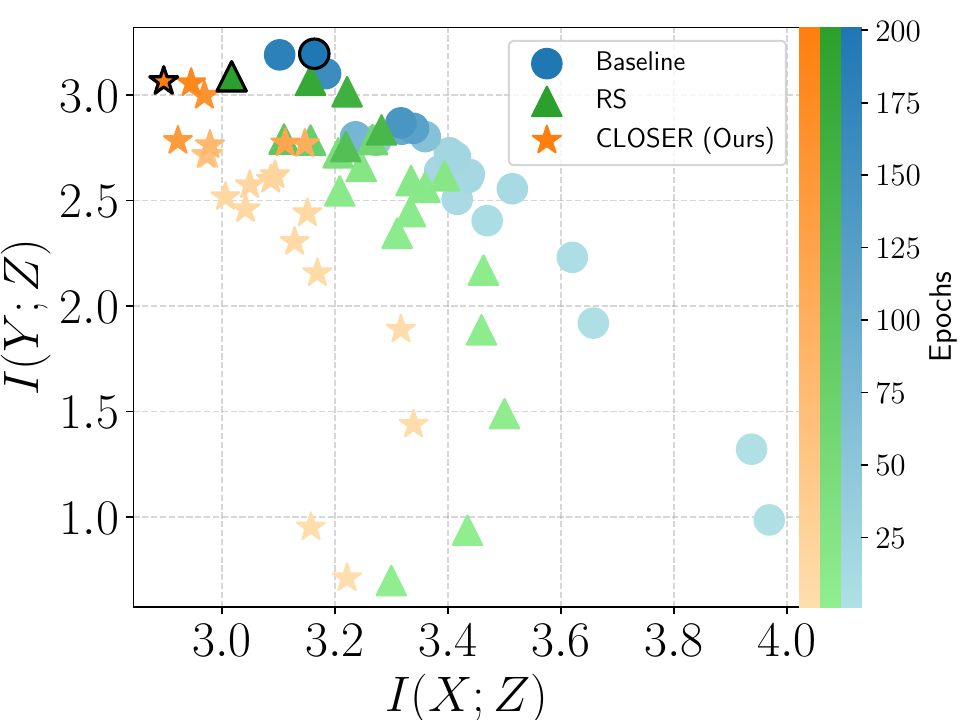}
	   \caption{Base Classes}
    	\label{fig:ib_base}
    \end{subfigure}
    \begin{subfigure}[t!]{0.3\linewidth}
	   \includegraphics[width=\linewidth]{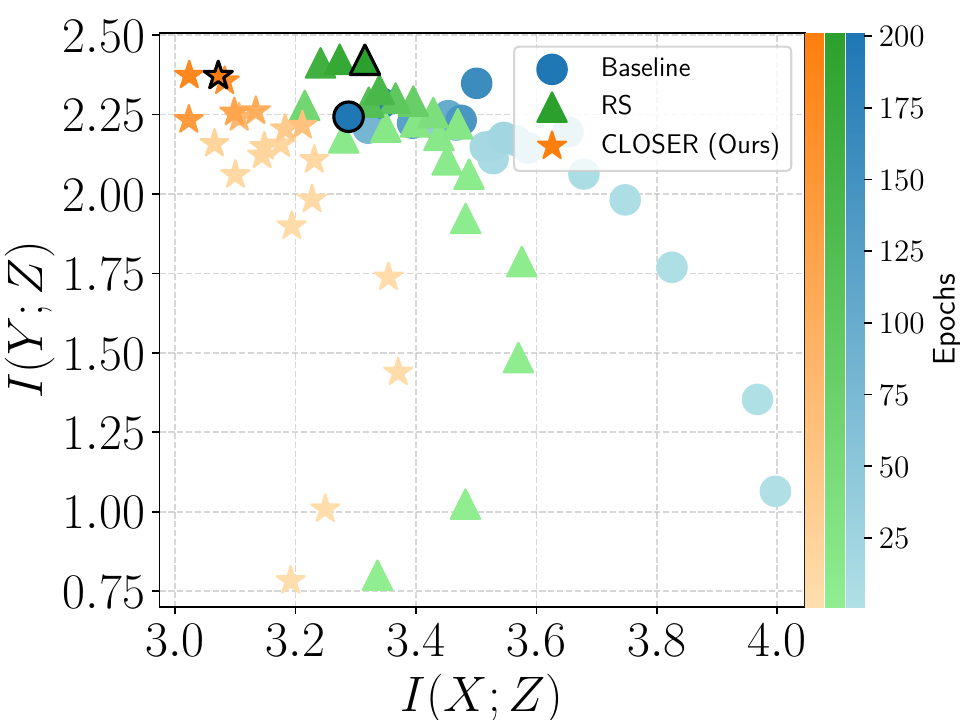}
	   \caption{New Classes}
	   \label{fig:ib_new}
    \end{subfigure}
    \begin{subfigure}[t!]{0.3\linewidth}
	   \includegraphics[width=\linewidth]{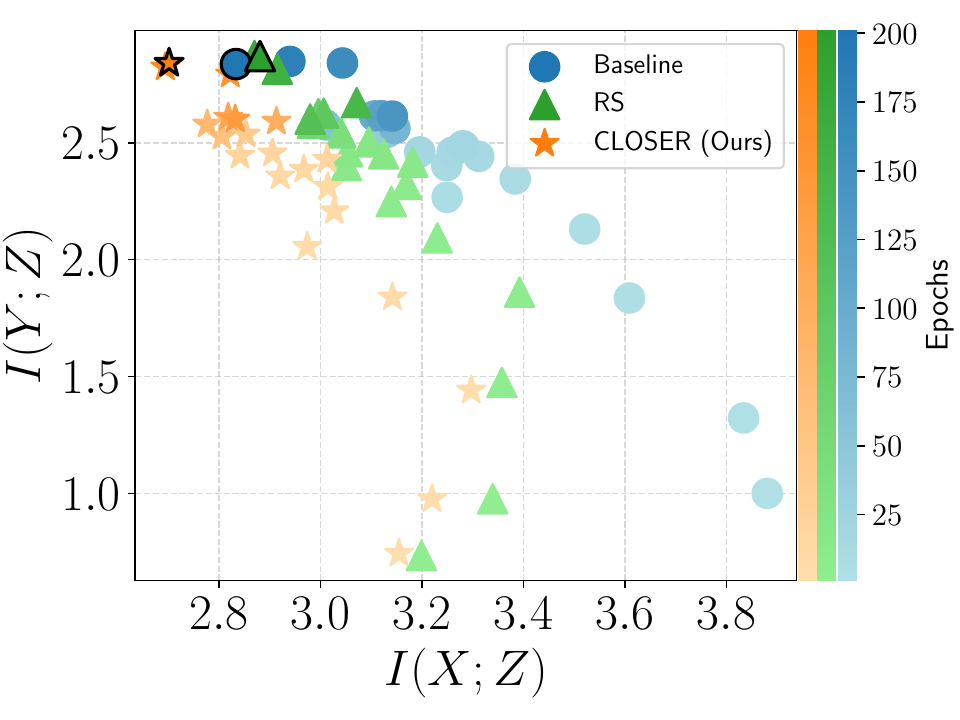}
	   \caption{Whole Classes}
	   \label{fig:ib_whole}
    \end{subfigure}
    \caption{\textbf{Information bottleneck trade-off analysis.}
    We compare representations acquired by three different methods by assessing the information bottleneck (IB) trade-off.
    `RS' refers to representation spreading methods.
    We indicate the final models with black edges.
    The experiments are conducted on the CIFAR100 dataset.
    }
    \label{fig:ib}
\end{figure*}
\subsection{Information Bottleneck trade-off Analysis}
In Section~\ref{sec:ib}, we show the connection between the proposed objective function and the information bottleneck (IB) trade-off.
To validate our analysis, we measure the mutual information between representations and inputs $I(X;Z)$ and the mutual information between representations and targets $I(Y;Z)$ for the baseline, representation spread approach, and our method CLOSER, as shown in Fig.~\ref{fig:ib}.
In the figure, the more left (lower $I(X;Z)$) and upper (higher $I(Y;Z)$) regions imply better IB trade-off.
Our proposed method CLOSER demonstrates to have found a better IB trade-off, especially when considering whole classes, including base and new classes.
The results are encouraging in that CLOSER is able to find a better IB trade-off for all classes, when the feature extractor is only trained on base classes and fixed afterwards.
The results demonstrate that CLOSER is effective in tackling a particularly difficult challenge of FSCIL: learning transferable and discriminative features from base classes.

\begin{figure*}[t!]
    \centering
        \begin{subfigure}{0.3\linewidth}
	   \includegraphics[width=\linewidth]{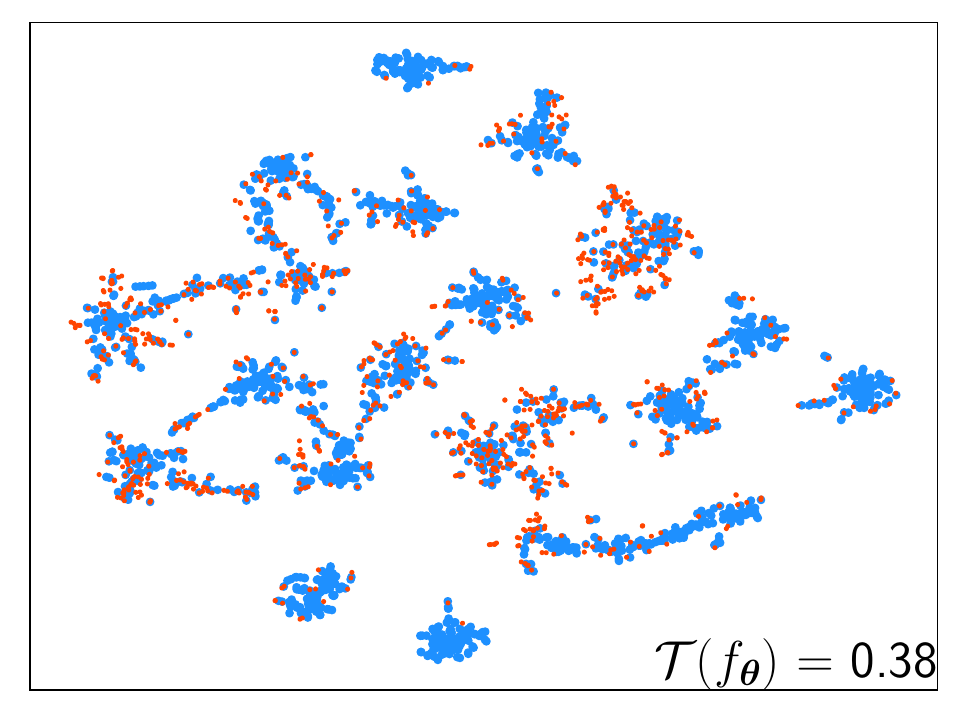}
	   \caption{Baseline}
        \label{fig:tsne_baseline}
    \end{subfigure}
    \begin{subfigure}{0.3\linewidth}
    	\includegraphics[width=\linewidth]{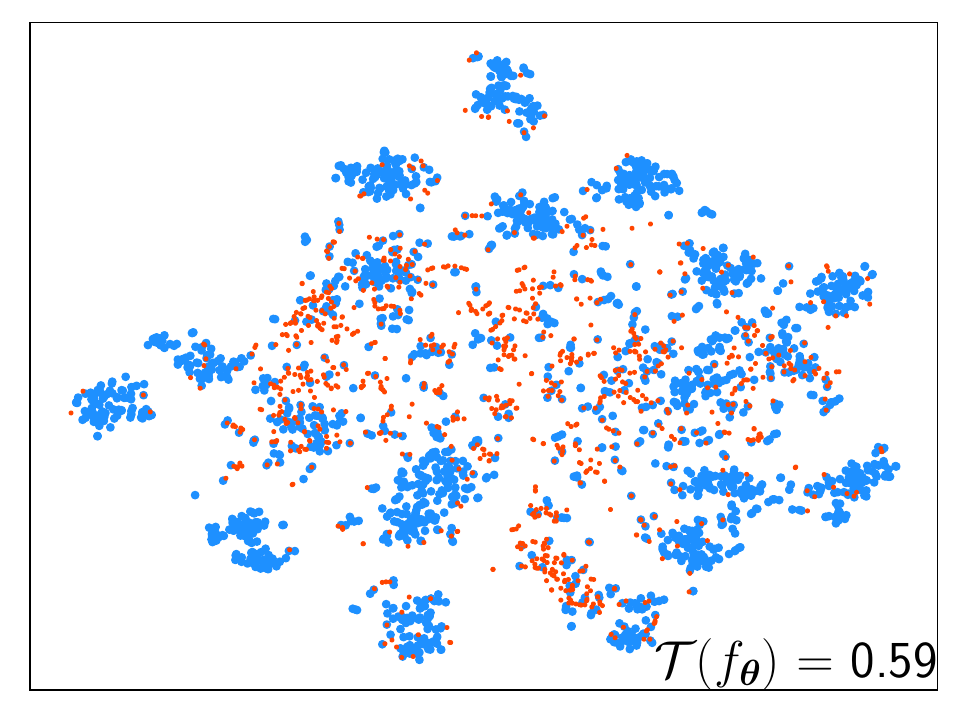}
        \caption{\textcolor{black}{Baseline + RS}}
        \label{fig:tsne_spread}
    \end{subfigure}
     \begin{subfigure}{0.3\linewidth}
	   \includegraphics[width=\linewidth]{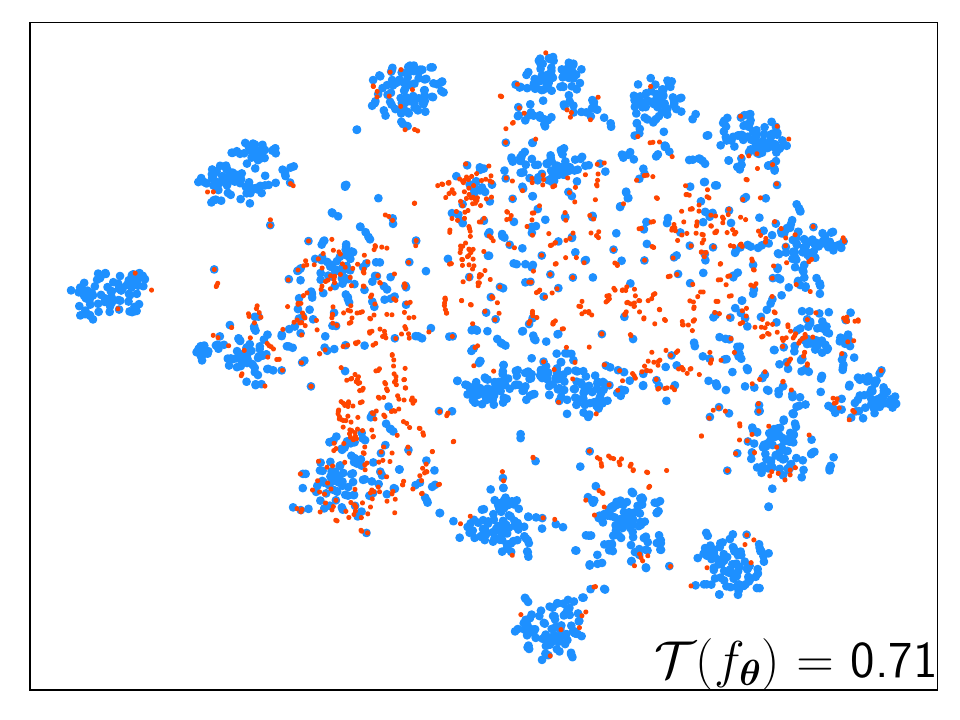}
        \caption{\textcolor{black}{CLOSER (Ours)}}
     \label{fig:tsne_ours}
    \end{subfigure}
    \caption{\textbf{T-SNE visualization of learned representations.}
    The blue and red points indicate the base and new class samples, respectively.
    We measure $\gT(f_{\boldsymbol{\theta}})$ to quantify the transferability of the learned representation.
    We conducted the experiments on CIFAR100 with reduced classes (20 base classes and 10 new classes) for better visualization.
    \textcolor{black}{The classification results for each experiment are as follows:
    \textbf{(a)}: $A_B$=78.85\%, $A_N$=13.50\%, $A_W$=57.07\%. 
    \textbf{(b)}: $A_B$=77.85\%, $A_N$=25.50\%, $A_W$=60.22\%. 
    \textbf{(c)}: $A_B$=78.88\%, $A_N$=28.80\%, $A_W$=62.18\%.}
}
	\label{fig:tsne}
\end{figure*}

\subsection{T-SNE Analysis}
For qualitative evaluation, we visualize the learned representation trained by different configurations of the proposed losses, which is illustrated in Fig.~\ref{fig:tsne}.
The value of $\gT(f_{\boldsymbol{\theta}})$ in the right below in each figure is for measuring the transferability of representation. 
The baseline representation (a) shows the characteristics of the base class, represented by the features of the new classes mapped to those of the base classes, also indicated by the low value of $\gT(f_{\boldsymbol{\theta}})$.
Spreading of features (b) largely resolves the overfitting issues, exhibited by larger distances between new classes and base classes; i.e., the increase in $\gT(f_{\boldsymbol{\theta}})$.
Finally, $\mathcal{L}_{\text{inter}}$ is used to compensate for the decreased performance on the base classes due to the spread representation.
Reducing inter-class distance also enhances the separability between the new class samples and the clusters of base classes, as evidenced by the further increase in $\gT(f_{\boldsymbol{\theta}})$.

\section{Limitations and Discussions}
Our work focuses on learning representation that is discriminative yet transferable to unseen classes to tackle the challenges of FSCIL.
As such, our work does not consider an option of updating representation with new classes.
Continual update of representation can improve the performance on new classes, however at the cost of sacrificing the old-class performance.
Thus, it is a question of stability-plasticity dilemma, where our method focuses more on stability while improving plasticity through discriminative and transferable representation.
Furthermore, similar to other works on FSCIL, our method is limited to classification tasks.
But, we believe our work can have an impact on other domains, akin to the impact of representation spread methods, such as contrastive learning. 
\section{Conclusion}
To tackle convoluted challenges in few-shot class-incremental learning (FSCIL), we focus on representation learning, which plays a crucial role.
In contrast to previous FSCIL methods that have focused on maximizing the distance between classes, our experimental and theoretic analysis suggests that the closer classes are, the better for FSCIL, especially when feature sharing is encouraged between classes.
Upon the analysis, we propose a simple, yet seemingly counter-intuitive idea: bring classes closer for a better transferability-discriminability Pareto front.
Our experimental results and information-bottleneck-theory-based analysis suggest that our work can provide a promising research avenue.
As such, we hope that our work will inspire future works and discussions in this research direction.

\noindent\textbf{Acknowledgments.}
This work was supported in part by the IITP grants [No. 2021-0-01343, Artificial Intelligence Graduate School Program (Seoul National University), No.2021-0-02068, and No.2023-0-00156], the NRF grant [No.2021M3A
9E4080782] funded by the Korean government (MSIT).

\newpage
\setcounter{section}{0}
\setcounter{figure}{0}
\setcounter{table}{0}
\setcounter{equation}{0}

\renewcommand{\thetable}{S\arabic{table}}
\renewcommand{\thesection}{S\arabic{section}}
\renewcommand{\thefigure}{S\arabic{figure}}
\renewcommand{\theequation}{S\arabic{equation}}
\renewcommand{\thetheorem}{S\arabic{theorem}}
\renewcommand{\thelemma}{S\arabic{lemma}}

\newcommand\lreqn[2]{\noindent\makebox[\textwidth]{$\displaystyle#1$\hfill(#2)}\vspace{2ex}}

\section{Proof of Theorem~\ref{theorem:main}}\label{sec:supple:proof}
In this section, we present the proof of Theorem~\ref{theorem:main} of the main manuscript.
For an image classification task, let $X \in\mathbb{R}^{h\times w\times 3}$ and $Y \in\mathbb{R}^C$  denote the input and label, respectively.
Our goal is to train an encoder with parameters $\boldsymbol{\theta}$, $f_{\boldsymbol{\theta}}: \mathbb{R}^{h\times w\times 3} \rightarrow \mathbb{R}^d$.
The latent representation $Z$ is obtained by normalizing the network embedding, \ie $Z=\frac{f_{\boldsymbol{\theta}}(X)}{\lVert f_{\boldsymbol{\theta}}(X) \rVert} \in \mathcal{S}_d$ where $\mathcal{S}_d$ denotes the surface of the $d$-dimensional unit hypersphere.
Let $\Sigma_{W_i}$ and $\Sigma_{T}$ denote the covariance matrices of representations within the $i$-th class and whole classes, respectively.
We consider the trade-off objective of information-bottleneck (IB) theory as solving $\text{max}\: \frac{I(Y;Z)}{\beta I(X;Z)}$, where $I(\cdot;\cdot)$ denotes the mutual information between two variables and $\beta>0$, as discussed in the main manuscript.
After omitting $\beta$ for simplicity and modest assumptions, we prove the Theorem~\ref{theorem:main} as follows.

\vspace{1cm}
\begin{proof}
    Let $H(\cdot)$ denote a differential entropy of a continuous random variable.
    Then, $I(Y;Z)=H(Z)-H(Z \lvert Y)$ and $I(X;Z) = H(Z) - H(Z \lvert X)$.
    Since $f_{\boldsymbol{\theta}}$ is deterministic and there are finite examples in the dataset, $I(X;Z) = H(Z)$~\cite{cui22discriminability}.
    We then obtain:
    \begin{equation}\label{eq:s1}
    \begin{split}
        \frac{I(Y;Z)}{I(X;Z)} &= \frac{H(Z)-H(Z \lvert Y)}{H(Z)}\\
        & = 1 - \frac{H(Z \lvert Y)}{H(Z)}.
    \end{split}
    \end{equation}
    By representing $H(Z\lvert Y)$ as the weighted sum of the entropy of $Z$ conditioned on each possible value of $Y$, we derive
    \begin{equation}\label{eq:s2}
        \frac{I(Y;Z)}{I(X;Z)} = 1 - \frac{\sum\limits_{i=1}^C P(Y=y_i) H(Z\lvert Y=y_i)}{H(Z)},
    \end{equation}
    where $y_i \in \mathbb{R}^C$ is a one-hot vector containing a single 1 in the $i$-th element, hence denoting the label of $i$-th class.
    Using the property of differential entropy~\cite{elements2006cover}, we obtain the following inequality on $H(Z)$:
    \begin{equation}\label{eq:s3}
        H(Z) \leq \frac{d}{2} \log(2\pi e) + \frac{1}{2} \log \lvert \Sigma_T \rvert.
    \end{equation}
    By assuming that the representation distribution of each class follows a multivariate Gaussian distribution, the following equality holds for each $H(Z\lvert Y=y_i)$:
    \begin{equation}\label{eq:s4}
        H(Z\lvert Y=y_i) = \frac{d}{2} \log(2\pi e) + \frac{1}{2} \log \lvert \Sigma_{W_i} \rvert, \qquad i=\{1,2, \dotsm, C\}.
    \end{equation}
    \vspace{0.5cm}
    \begin{lemma}\label{lemma1}
        If $Z \in \mathcal{S}_d$ and $d \gg 1$, then $H(Z) < 0$.
    \end{lemma}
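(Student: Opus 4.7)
The plan is to combine two standard inequalities: the maximum-entropy bound already cited in Eq.~\eqref{eq:s3}, and an AM--GM / Hadamard-type bound on $\lvert \Sigma_T \rvert$ coming from the fact that $Z$ lives on the unit hypersphere. Concretely, I would start from
\[
H(Z) \;\leq\; \frac{d}{2}\log(2\pi e) + \frac{1}{2}\log\lvert \Sigma_T\rvert,
\]
and then bound $\lvert \Sigma_T\rvert$ from above by controlling its trace.

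Since $Z\in\mathcal{S}_d$, we have $\lVert Z\rVert=1$ almost surely, so
\[
\mathrm{tr}(\Sigma_T) \;=\; \mathbb{E}\!\left[\lVert Z-\mathbb{E}[Z]\rVert^2\right] \;=\; \mathbb{E}[\lVert Z\rVert^2]-\lVert\mathbb{E}[Z]\rVert^2 \;\leq\; 1.
\]
If $\lambda_1,\dots,\lambda_d\geq 0$ are the eigenvalues of $\Sigma_T$, then by the AM--GM inequality,
\[
\lvert \Sigma_T\rvert \;=\; \prod_{k=1}^{d}\lambda_k \;\leq\; \left(\frac{1}{d}\sum_{k=1}^{d}\lambda_k\right)^{\!d} \;=\; \left(\frac{\mathrm{tr}(\Sigma_T)}{d}\right)^{\!d} \;\leq\; \frac{1}{d^d}.
\]
Substituting this into the maximum-entropy bound yields
\[
H(Z) \;\leq\; \frac{d}{2}\log(2\pi e) + \frac{1}{2}\log\frac{1}{d^d} \;=\; \frac{d}{2}\log\!\frac{2\pi e}{d}.
\]

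Finally, I would observe that $\log(2\pi e/d)<0$ as soon as $d>2\pi e\approx 17.08$, so for $d\gg 1$ the right-hand side is strictly negative, giving $H(Z)<0$ as claimed. The only subtle step is justifying the maximum-entropy inequality in this setting: strictly speaking, a distribution supported on the measure-zero set $\mathcal{S}_d\subset\mathbb{R}^d$ has $H(Z)=-\infty$, which would trivially prove the lemma. To be consistent with the rest of the paper (where $H(Z)$ is finite and estimated via MINE), I would instead interpret $Z$ as a thin-shell density on $\mathbb{R}^d$ concentrated near $\mathcal{S}_d$, for which the Gaussian upper bound and the trace bound above still apply; this interpretive step is the only non-routine part of the argument and I expect it to be the main obstacle to making the statement fully rigorous.
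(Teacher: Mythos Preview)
Your argument is correct and reaches the same conclusion by a genuinely different route than the paper. The paper's proof does not go through Eq.~\eqref{eq:s3} at all; instead it bounds $H(Z)$ directly by the entropy of the uniform distribution on the support $\mathcal{D}\subseteq\mathcal{S}_d$, obtaining $H(Z)\le\log V$ with $V\le\frac{2\pi^{d/2}}{\Gamma(d/2)}$, and then appeals to the fact that this surface area tends to zero as $d\to\infty$. Your approach instead recycles the Gaussian maximum-entropy inequality already stated as Eq.~\eqref{eq:s3} and closes the estimate with the purely algebraic AM--GM/trace bound $\lvert\Sigma_T\rvert\le(\mathrm{tr}(\Sigma_T)/d)^d\le d^{-d}$, which uses only $\lVert Z\rVert=1$. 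The two bounds are asymptotically equivalent (Stirling on $\Gamma(d/2)$ gives $\log\frac{2\pi^{d/2}}{\Gamma(d/2)}\approx\frac{d}{2}\log\frac{2\pi e}{d}+\frac{1}{2}\log\frac{d}{\pi}$, matching your leading term), but your version has the mild advantages of reusing a displayed inequality, avoiding the surface-area formula, and making the threshold $d>2\pi e$ explicit. Your caveat about the measure-zero support of $\mathcal{S}_d$ in $\mathbb{R}^d$ is well taken and applies equally to the paper's argument, which writes $dD=dx_1\cdots dx_d$ yet equates $\int_{\mathcal{S}_d}dD$ to the $(d-1)$-dimensional surface area; both proofs are therefore heuristic at the same point.
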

    \begin{proof}
        Let $f_Z$ denote the probability density function of a continuous variable $Z\in \mathcal{S}_d$.
        Then, we define the support of $Z$ as $\mathcal{D} = \{z\in \mathcal{S}_d\lvert f_Z(z)>0\}$.
        Since the maximum entropy within $\mathcal{D}$ is the entropy of a uniform distribution within $\mathcal{D}$, we derive
        \begin{equation}
            \begin{split}
                H(Z) & \leq H(\mathcal{U}_\mathcal{D})\\
                & = -\int_\mathcal{D} \frac{1}{V} \log \frac{1}{V} dD\\
                & = \log V,
            \end{split}    
        \end{equation}
        where $\mathcal{U}_\mathcal{D}$ denotes a uniform distribution defined in $\mathcal{D}$, $dD = dx_1dx_2\dotsm dx_d$, and $V = \int_\mathcal{D} dD$ is the volume of $\mathcal{D}$.
        Since $\mathcal{D}$ is a subset of $\mathcal{S}_d$, the maximum volume of $\mathcal{D}$ is the volume of $\mathcal{S}_d$, \ie $\text{max}\: V = \text{max}_\mathcal{D} \: \int_\mathcal{D} dD = \int_{\mathcal{S}_d} dD = \frac{2\pi^{d/2}}{\Gamma(d/2)}$ where $\Gamma(\cdot)$ is the Gamma function.
        Thus, $H(Z) \leq \log V \leq \log \frac{2\pi^{d/2}}{\Gamma(d/2)}$, leading to $H(Z) \leq \log \frac{2\pi^{d/2}}{\Gamma(d/2)} <0$  when $d$ is sufficiently large.
        \leavevmode\penalty50\hbox{}\nobreak\hfill $\square$
    \end{proof}
    Given that the equality of Eq.~\eqref{eq:s3} holds if $Z$ follows a multivariate Gaussian distribution within $\mathcal{S}_d$, Lemma~\ref{lemma1} proves that the upper bound of Eq.~\eqref{eq:s3} is also negative, leading to $H(Z) \leq \frac{d}{2} \log(2\pi e) + \frac{1}{2} \log \lvert \Sigma_T \rvert<0$.
    Thus, from Eq.~\eqref{eq:s3}, Eq.~\eqref{eq:s4}, and Lemma~\ref{lemma1} and assuming $P(Y=y_i)=\frac{1}{C}$ for all $i$, we can rewrite Eq.~\eqref{eq:s2} as follows:
    \begin{equation}
    \begin{split}
        \frac{I(Y;Z)}{I(X;Z)} & = 1 - \frac{\sum\limits_{i=1}^C P(Y=y_i) H(Z\lvert Y=y_i)}{H(Z)}\\
        & = 1 - \frac{\frac{d}{2} \cdot \log(2\pi e) + \frac{1}{C} \sum\limits_{i=1}^C \frac{1}{2} \log \lvert \Sigma_{W_i} \rvert}{H(Z)} \hspace{1em} \text{(Eq.~\eqref{eq:s4} and} \:P(Y=y_i) = \frac{1}{C})\\
        & \geq 1 - \frac{d\cdot \log(2\pi e) + \frac{1}{C}\sum\limits_{i=1}^C \log \lvert \Sigma_{W_i} \rvert}{d\cdot \log(2\pi e) + \log \lvert \Sigma_T \rvert} \hspace{4em} \text{(Eq.~\eqref{eq:s3} and Lemma~\ref{lemma1})}\\
    \end{split}
    \end{equation}
    Both the numerator and denominator in the fractional term of the lower bound are proven to be negative by Lemma~\ref{lemma1} and are monotonically increasing functions of $\lvert \Sigma_{W_i} \rvert$ and $\lvert \Sigma_T \rvert$, respectively.
    Therefore, the lower bound of $\frac{I(Y;Z)}{I(X;Z)}$ is a monotonically increasing function of $\lvert \Sigma_{W_i} \rvert$ and a monotonically decreasing function of $\lvert \Sigma_T \rvert$.
    \leavevmode\penalty50\hbox{}\nobreak\hfill $\square$
\end{proof}
\begin{table*}[h!]
   \centering
   \scriptsize
   \caption{\textbf{Experiments on our ImageNet-FSCIL dataset.}
   RS refers to `Representation Spreading'.
   }
   \resizebox{1\linewidth}{!}{
   \begin{tabular}{lcccccccccccccc}
   \toprule[\heavyrulewidth]
        \multicolumn{1}{c}{\multirow{2}{*}{Method}} & \multicolumn{11}{c}{Acc. in each session (\%) } &
        \multicolumn{1}{c}{\multirow{2}{*}{PD (\%) $\downarrow$ }} & \multicolumn{1}{c}{\multirow{2}{*}{$A_B$ (\%)}} & \multicolumn{1}{c}{\multirow{2}{*}{$A_N$ (\%)}}\\
        \cmidrule{2-12}
        \multicolumn{1}{c}{} & 0 & 1 & 2 & 3 & 4 & 5 & 6 & 7 & 8 & 9 & 10\\
   \toprule
   Baseline & 73.68 & 69.81 & 66.53 & 63.73 & 60.97 & 58.65 & 56.49 & 54.38 & 52.71 & 51.03 & 49.48 & 24.20 & \textbf{71.14} & 16.99\\
   Baseline+RS & 69.13 & 65.73 & 62.72 & 60.41 & 58.01 & 56.09 & 54.15 & 52.14 & 50.78 & 49.42 & 48.02 & 21.11 & 67.16 & 19.32\\
   \textbf{CLOSER}& 71.37 & 68.02 & 65.05 & 62.64 & 60.21 & 58.17 & 56.30 & 54.29 & 52.87 & 51.62 & \textbf{50.28} & \textbf{21.09} & 69.38 & \textbf{21.63}\\
   \bottomrule[\heavyrulewidth]
   \end{tabular}
   }
   \label{tab:imagenet}
\end{table*}

\begin{table}[h!]
\centering
\scriptsize
\caption{\textbf{CUB200 experiments with CNN and ViT network.}
RS refers to `Representation Spreading'.
Both the ResNet-18 and ViT-B/16 are pre-trained on ImageNet.
All results are obtained after all incremental sessions.
Please refer to Section~\ref{subsec:exp_details} for the other details on CUB200 experiments.
}
\resizebox{1\linewidth}{!}{
    \begin{tabular}{lccccccccccccccc}
   \toprule[\heavyrulewidth]
        \multicolumn{1}{c}{\multirow{2}{*}{Architecture}} & \multicolumn{1}{c}{\multirow{2}{*}{Method}} & \multicolumn{11}{c}{Acc. in each session (\%) } &
        \multicolumn{1}{c}{\multirow{2}{*}{PD (\%) $\downarrow$ }} & \multicolumn{1}{c}{\multirow{2}{*}{$A_B$ (\%)}} & \multicolumn{1}{c}{\multirow{2}{*}{$A_N$ (\%)}}\\
        \cmidrule{3-13}
        & & 0 & 1 & 2 & 3 & 4 & 5 & 6 & 7 & 8 & 9 & 10\\
   \toprule
   \multirow{3}{*}{ResNet-18}& Baseline & 79.71 & 76.23 & 73.35 & 69.61 & 68.22 & 66.23 & 65.32 & 64.39 & 62.52 & 62.43 & 61.43 & 18.28 & 76.15 & 47.03\\
   &Baseline+RS & 77.44 & 74.26 & 71.49 & 68.24 & 67.16 & 64.96 & 64.33 & 63.71 & 62.03 & 61.90 & 61.29 & 16.15 & 74.76 & 48.12\\
   &\textbf{CLOSER}& 79.34 & 75.92 & 73.50 & 70.47 & 69.24 & 67.22 & 66.73 & 65.69 & 64.00 & 64.02 & \textbf{63.58} & \textbf{15.76} & \textbf{76.40} & \textbf{51.06}\\
   \midrule
   \multirow{3}{*}{Vit-B/16}& Baseline & 82.65 & 79.86 & 77.78 & 75.03 & 73.98 & 72.19 & 71.02 & 70.64 & 68.77 & 69.11 & 68.81 & 13.84 & 80.80 & 57.10\\
   &Baseline+RS & 82.09 & 79.32 & 77.58 & 75.40 & 74.38 & 72.33 & 71.35 & 70.80 & 69.17 & 69.57 & 69.45 & 12.64 & 80.20 & 58.94\\
   &\textbf{CLOSER}& 83.38 & 81.01 & 79.50 & 77.28 & 76.49 & 74.78 & 73.97 & 73.24 & 71.51 & 71.90 & \textbf{71.71} & \textbf{11.67} & \textbf{81.32} & \textbf{62.32}\\
   \bottomrule[\heavyrulewidth]
   \end{tabular}
   }
    \label{tab:vit}
\end{table}
\section{Generalization Ability of CLOSER}\label{sec:supple:generalization}
In this section, we examine the generalization-ability of CLOSER concerning both dataset and architecture.
To validate the effectiveness of CLOSER on a more challenging dataset, we construct ImageNet-FSCIL dataset, where the total 1000 classes of the ImageNet dataset are split into 600 base and 400 new classes.
The new classes are further divided into 10 disjoint sets, each set of which is sequentially provided with 5 training examples during the incremental sessions.
Using the ImageNet-FSCIL dataset, we compare CLOSER with baseline and baseline + representation spreading (RS) methods.
In detail, we train ResNet-18 using base-class samples from scratch during 90 epochs with SGD optimizer.
The learning rate is initially set to 0.1 and decays by a factor of 0.1 every 30 epochs.
Moreover, beyond the convolutional neural network, we explore the generalization capability of CLOSER on the popular Vision-Transformer (ViT) architecture~\cite{dosovitskiy2021vit}.
\Cref{tab:imagenet,tab:vit} show the results on ImageNet-FSCIL dataset and the ViT network, respectively.
We observe that representations acquired by the baseline method exhibit great discriminability on base classes, indicated by the relatively high $A_B$, but worst transferability to new classes and significant interference between the base and new classes, indicated by the lowest $A_N$ and the highest PD, respectively.
While the RS method is observed to enhance $A_N$ and PD, it compromises the discriminability on base classes, indicated by the lowest $A_B$.
On the other hand, our CLOSER achieves a significantly improved balance between discriminability and transferability, while also notably reducing the interference between base and new classes, indicated by the highest $A_W$ and $A_N$ and the lowest PD, respectively.
Consistent with the results on CUB200 (Table~\ref{tab:cub200}), CIFAR100 (Table~\ref{tab:cifar}), and \textit{mini}ImageNet (Table~\ref{tab:mini}) in the main manuscript, these results demonstrate CLOSER's ability to generalize on a more challenging dataset and beyond the CNN architecture.

\begin{table}[t!]
\centering
\scriptsize
\caption{\textbf{Comparison with prior works on the quality of learned representation.}
    We obtained the results of the previous works using the officially released codes.
    For all experiments, we use ResNet-20 and ResNet-18 for CIFAR100 and \textit{mini}ImageNet, respectively, as a backbone model.
    }
\resizebox{0.7\linewidth}{!}{
    \begin{tabular}{ccccccccc}
      \toprule
      \multirow{2}{*}{Method} & \multicolumn{4}{c}{CIFAR100} & \multicolumn{4}{c}{\textit{mini}ImageNet}\\
      \cmidrule{2-9}
      & $\mathcal{T}(f_{\boldsymbol{\theta}})$ & $A_N$(\%) & $A_B$(\%) & $A_W$(\%) & $\mathcal{T}(f_{\boldsymbol{\theta}})$ & $A_N$(\%) & $A_B$(\%) & $A_W$(\%)\\
      \toprule
      CLOM~\cite{zou2022margin} & 0.66 & 18.95 & 70.56 & 49.92 & 0.68 & 11.70 & 72.20 & 48.00\\
      SAVC~\cite{song2023learning} & 0.55 & 17.18 & 70.43 & 49.53 & 0.74 & 18.50 & \textbf{72.36}& 50.82\\
      \textbf{CLOSER (ours)} & \textbf{0.77} & \textbf{27.23} & \textbf{70.72} & \textbf{53.32} & \textbf{0.86} & \textbf{25.28} & 72.03 & \textbf{53.33}\\
      \bottomrule
    \end{tabular}}
    \label{tab:supp:trans}
\end{table}

\section{Comparison with prior works on quality of learned representation}\label{sec:supple:comparison}
In addition to the comparisons in \Cref{tab:cub200,tab:cifar,tab:mini}, we compare against the mentioned previous representation-learning-based few-shot class incremental learning works~\cite{zou2022margin,song2023learning}, with respect to the quality of learned representations.
Specifically, we quantify the quality of representations using $\mathcal{T}(f_{\boldsymbol{\theta}})$ (defined in Eq.~\eqref{eq:metric} of the main manuscript) and the accuracy on new ($A_N$), base ($A_B$), and whole classes ($A_W$).
We measure $\mathcal{T}(f_{\boldsymbol{\theta}})$ to quantify how the representations of new classes are distinguishable from those of base classes.
The results presented in Table~\ref{tab:supp:trans} indicate that the representations obtained by previous works display relatively high $A_B$ but relatively low values for $\mathcal{T}(f_{\boldsymbol{\theta}})$ and $A_N$, indicating their lack of transferability of learned representation.
By contrast, our CLOSER achieves the comparable $A_B$ and the highest $\mathcal{T}(f_{\boldsymbol{\theta}})$ and $A_N$, indicating that CLOSER can yield representations with better trade-off between discriminability on base classes and transferability to new classes.
Although the prior works attempt to improve the trade-off between discriminability and transferability of the learned representation, they still rely on enlarging inter-class distance~\cite{song2023learning} or the spread of representation via negative class margin~\cite{zou2022margin}.
Thus, these results indicate that \textit{reducing} inter-class distance is significantly effective for striking a better balance between discriminability and transferability.

\section{Analysis on Class Margin}\label{sec:supp:margintemp}
In this section, we compare the effects of the class margin parameter ($m$) and the temperature parameter ($\tau$) in the softmax cross-entropy loss on representation learning in the context of FSCIL.
The results in Fig.~\ref{fig:supp:margintemp_plot} show the accuracy on the whole ($A_W$), base ($A_B$), and new classes ($A_N$) at the end of all training sessions with varying margin and temperature values.
As noted in the previous works~\cite{kornblith2021why,liu2020negative,zou2022margin}, when the margin and temperature decrease, $A_N$ tends to increase, while $A_B$ tends to decrease (except the case when $m=-0.2$ and $\tau=1/32$ due to unstable training).
However, we find that the impact of the margin becomes marginal when the temperature is high.
For example, when $\tau=1/8$, the difference between the highest and lowest $A_N$ is roughly 3$\%$, a relatively minor variation compared to the approximately 9$\%$ observed with a lower temperature setting.
The results with respect to $A_W$ also show that the trade-off between $A_B$ and $A_N$ has a relatively higher correlation with temperature than with margin, and the highest $A_W$ is achieved when $\tau=1/32$ and $m=0$.
The feature visualization analysis depicted in Fig.~\ref{fig:supp:margintemp_visualization} also shows similar results, showing that the temperature has a greater impact on the learned representation than the margin.
In particular, we note that as $\tau$ decreases, it encourages a more dispersed representation, a valuable characteristic for enhancing transferability.
Based on this analysis, we regard the temperature parameter as a more effective tool for addressing the issue of base class overfitting and consequently improving transferability.
Moreover, we observe that a negative margin does not promote narrow inter-class separation; instead, it is more associated with representation spread, as depicted in Fig.~\ref{fig:supp:margintemp_visualization}, underscoring the difference between our work and the previous work~\cite{zou2022margin}.

\begin{figure*}[t!]
    \centering
    \includegraphics[width=0.9\linewidth]{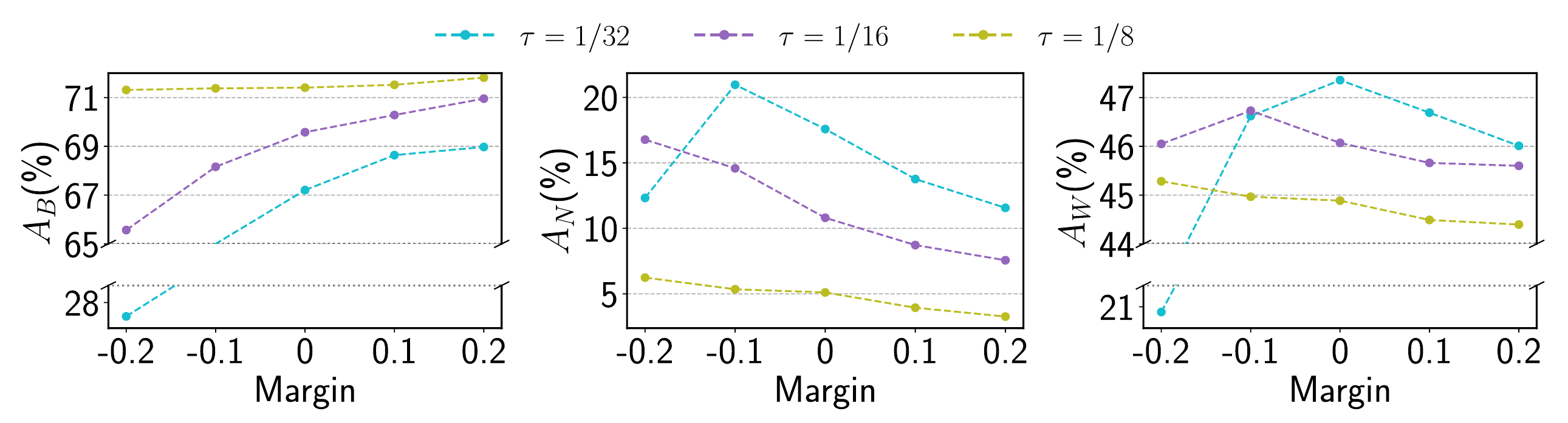}
    \caption{\textbf{Comparison of the impact of the margin and temperature parameters in the FSCIL problem.}
    Lowering temperature has a relatively greater influence on the performance than margin.
    The experiments are conducted on CIFAR100 and we report the averaged results from 3 independent experiments.}
    \label{fig:supp:margintemp_plot}
\end{figure*}
\section{Implementation Details}\label{sec:supple:details}
\noindent{\textbf{Self-Supervised Contrastive Learning.}}
For self-supervised contrastive learning (SSCL) discussed in Section~\ref{sec:spread}, we generate different views for each image in a mini-batch via data augmentation.
For both CIFAR100 and \textit{mini}ImageNet experiments, we apply random resized cropping, random horizontal flipping with probability 0.5, and random AutoAugment~\cite{cubuk2019autoaug} with probability 0.5.
For CUB200 experiments, we apply the random resized cropping and the random horizontal flipping with probability 0.5 but without AutoAugment since color information is crucial for fine-grained classification of the CUB200 dataset.
Unlike the previous methods on SSCL~\cite{chen2020simple,he2020momentum}, we do not use either a non-linear projection head or a momentum encoder since we found that the performance difference is marginal.

\noindent{\textbf{Optimization.}}
For optimization, we adhere to standard protocols from previous works~\cite{zhang2021cec}.
We use the stochastic gradient descent optimizer with weight decay of $5\cdot10^{-4}$ and Nesterov momentum 0.9.
We set the initial learning rate as 0.1, 0.1, and 0.005 for CIFAR100, \textit{mini}ImageNet, and CUB200 experiments, respectively, and decay them by 0.1 at the 80\% and 90\% of the total training epochs.
We set the total training epochs as 200 for both CIFAR100 and \textit{mini}ImageNet experiments and 50 for CUB200 experiments.

\noindent{\textbf{Hyper-parameters Search Strategy.}}\label{supp:sec:details}
In the proposed method, there are three hyperparameters including the temperature parameter $\tau$ in the softmax function and the loss weights for the SSCL ($\lambda_{\text{ssc}}$) and the inter-class distance loss ($\lambda_{\text{inter}}$).
Since we cannot acquire the validation dataset for new classes in the current benchmark setting, we perform a hyper-parameter search strategy using a synthetic dataset for new classes.
Following CEC~\cite{zhang2021cec}, we synthesize a new class by rotating images of a base class with a certain rotation degree.
After the base session, we conduct fake incremental sessions using a few synthetic new class samples and measure the overall performance using the validation set of the base and the fake new classes.
We split the dataset for base classes to obtain the validation set for the base classes.
We observe that this validation strategy provides a confident measure of the actual test performance of our algorithm, enabling an effective hyperparameters search.

\noindent{\textbf{Mutual Information Estimation.}}
To evaluate the mutual information, $I(X;Z)$ and $I(Y;Z)$, we adopt MINE~\cite{belghazi18mutual} method.
Specifically, we train a 4-layer Multi-Layer Perceptron (MLP) with ReLU activation to estimate the mutual information.
For $I(X;Z)$, we set the hidden dimension of the estimator as 256 and the input dimension as ($32 \times 32 \times 3 + 64$), which is the summation of the shape of a flattened image and latent representation.
For $I(Y;Z)$, we set the hidden dimension as 32 and the input dimension as ($C + 64$), which is the summation of the number of classes (base, new, or whole classes) and the dimension of latent representation.
For optimization, we adopt the Adam optimizer and set the learning rate to 1e-4.
We train the estimator for 10K iterations.

\begin{figure*}[t!]
    \centering
    \begin{subfigure}[b]{0.77\linewidth}
    \centering
    \includegraphics[width=\textwidth]{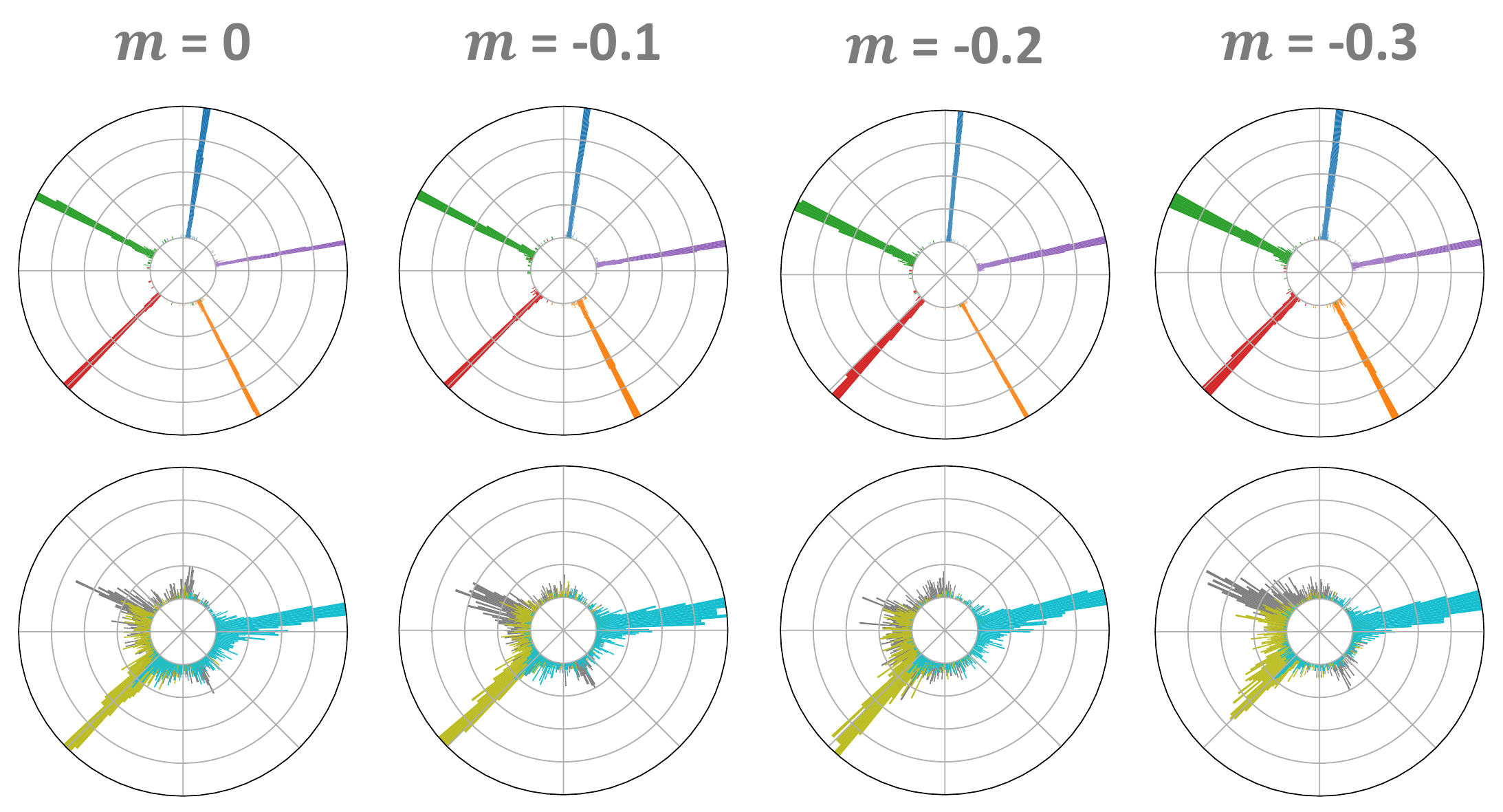}
    \caption{$\tau=\frac{1}{4}$}
    \end{subfigure}

    \begin{subfigure}[b]{0.77\linewidth}
    \centering
    \includegraphics[width=\textwidth]{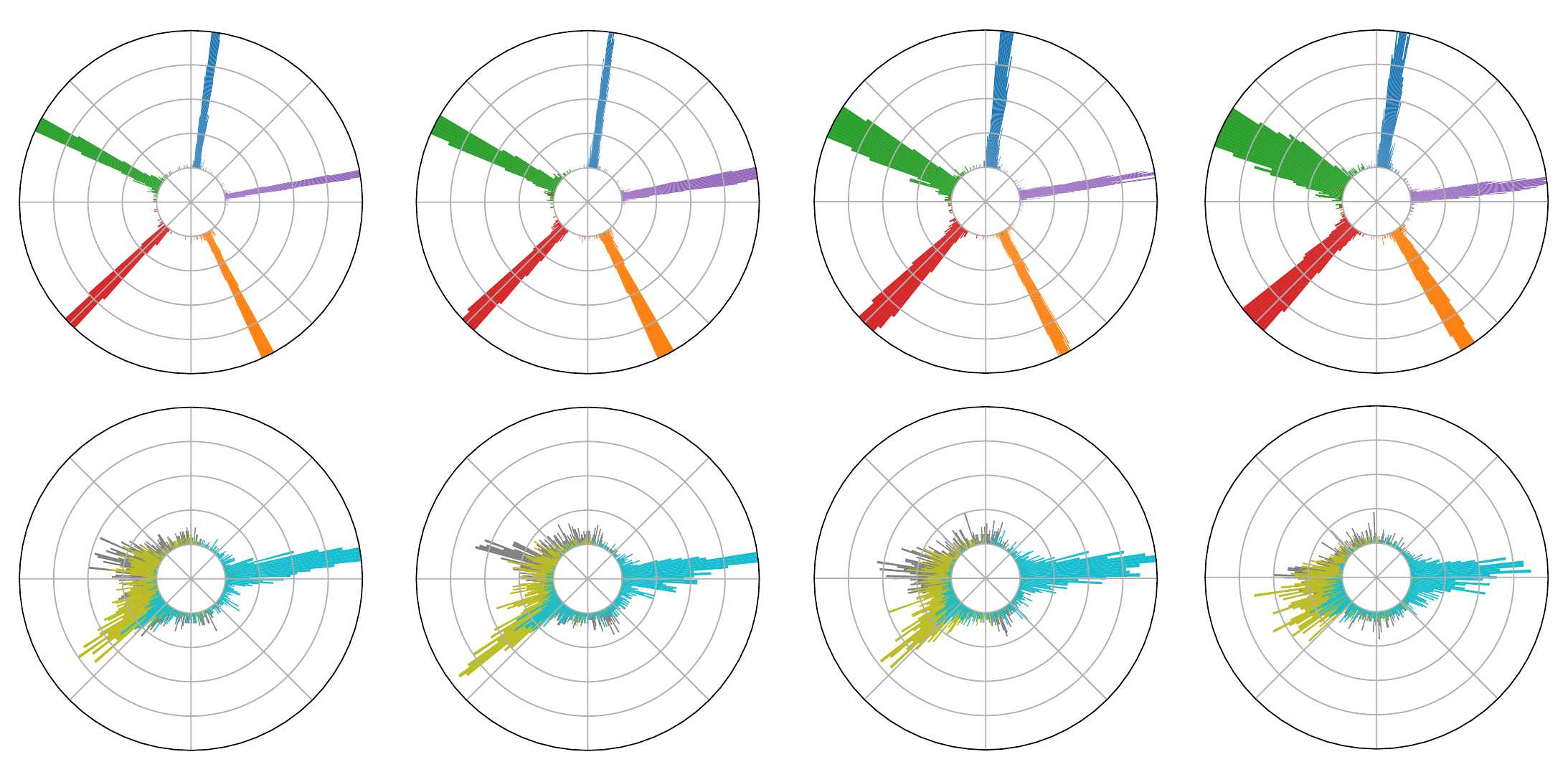}
    \caption{$\tau=\frac{1}{8}$}
    \end{subfigure}
    
    \begin{subfigure}[b]{0.77\linewidth}
    \centering
    \includegraphics[width=\textwidth]{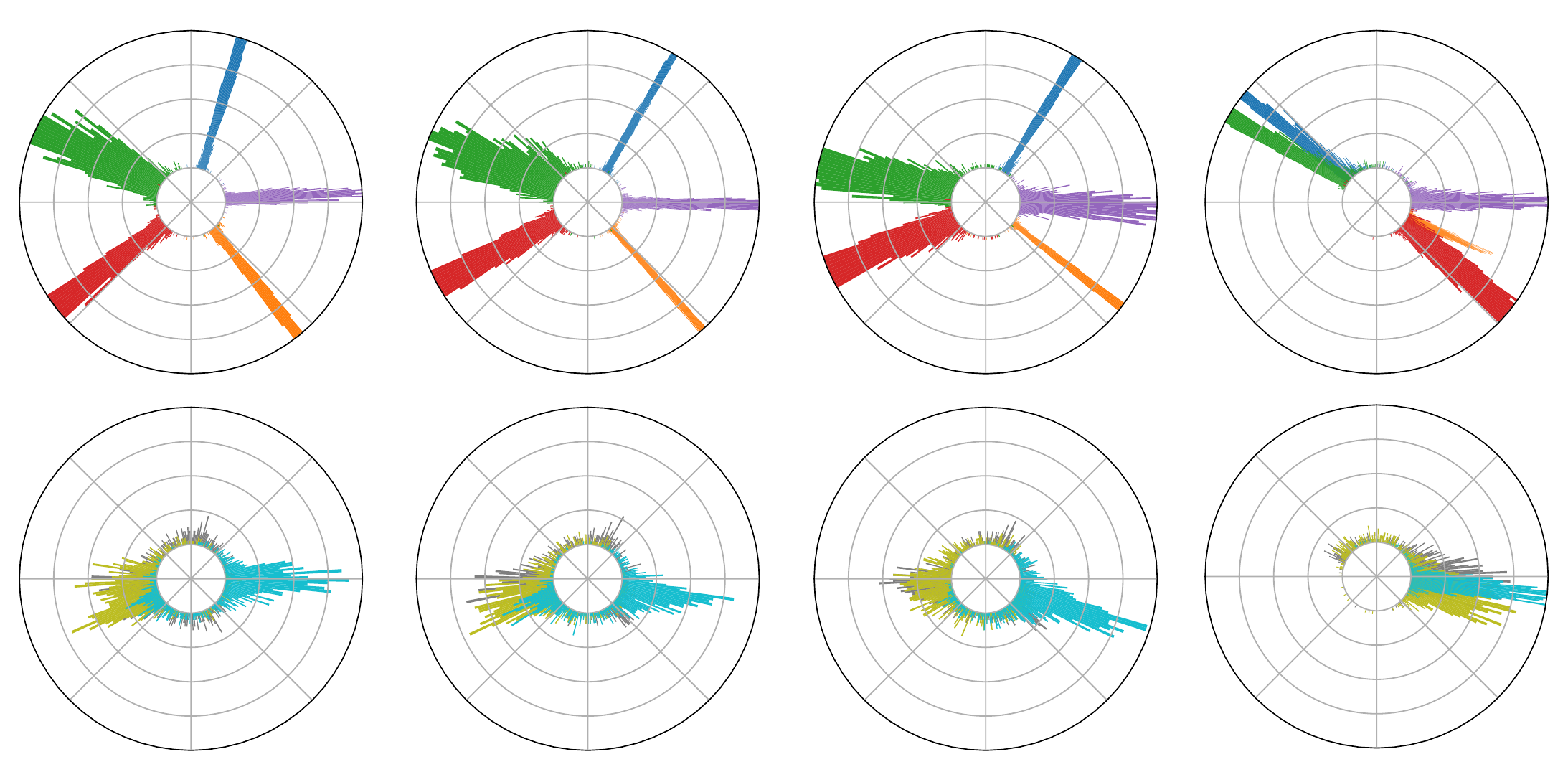}
    \caption{$\tau=\frac{1}{16}$}
    \end{subfigure}
    \caption{\textbf{Visualization of representation for comparison of the effect of margin and temperature.}
    Lowering temperature has a relatively greater influence on the performance than margin.
    We train a network on MNIST dataset with a 2-dimensional feature space and visualize angular histograms without a dimension reduction.
    The first and second row in each subfigure indicates the results on base and new classes, respectively.
    Each color represents a different class.
    }
    \label{fig:supp:margintemp_visualization}
\end{figure*}

\newpage
\bibliographystyle{splncs04}
\bibliography{main}

\end{document}